\let\proof\@undefined
\let\endproof\@undefined
\newtheorem{theorem}{Theorem}[section]
\newtheorem{observation}[theorem]{Observation}
\newtheorem{lemma}[theorem]{Lemma}
\theoremstyle{definition}
\newtheorem{definition}[theorem]{Definition}
\newtheorem{problem}[theorem]{Problem}
\theoremstyle{remark}
\newtheorem{remark}[theorem]{Remark}
\newcommand{\s}{\mathcal{S}}
\newcommand{\X}{\mathcal{X}}
\newcommand{\ChiObs}{\mathcal{X}_{\text{obs}}}
\newcommand{\blue}[1]{\textcolor{black}{#1}}
\title{\LARGE \bf
Minimal $t-$spanning Primitive Sets: An MILP Formulation
}
\title{\LARGE \bf
Spatio-Temporal Lattice Planning Using Optimal Motion Primitives}
\author{Alexander Botros and Stephen L.\ Smith
\thanks{This work is supported in part by the Natural Sciences and Engineering Research Council of Canada (NSERC)}
\thanks{The authors are with the Department of Electrical and Computer Engineering,
    University of Waterloo, 200 University Ave W, Waterloo ON, Canada, N2L 3G1
  (\botros, \smith)
    }
}
\begin{document}

\maketitle
\thispagestyle{empty}
\pagestyle{empty}

\begin{abstract}
Lattice-based planning techniques simplify the motion planning problem for autonomous vehicles by limiting available motions to a pre-computed set of \emph{primitives}. These primitives are combined online to generate complex maneuvers. A set of motion primitives $t$-span a lattice if, given a real number $t\geq 1$, any configuration in the lattice can be reached via a sequence of motion primitives whose cost is no more than a factor of $t$ from optimal. Computing a minimal $t$-spanning set balances a trade-off between computed motion quality and motion planning performance. In this work, we formulate this problem for an arbitrary lattice as a mixed integer linear program. We also propose an A*-based algorithm to solve the motion planning problem using these primitives and an algorithm that removes the excessive oscillations from planned motions -- a common problem in lattice-based planning. Our method is validated for autonomous driving in both parking lot and highway scenarios.
\end{abstract}

\section{Introduction}
Broadly, the motion planning problem is to search the configuration space of a system for a collision free path between a given start and goal \cite{844730} that optimizes desirable properties like travel time or comfort~\cite{paden2016survey}. This path or \emph{motion} can be used as a reference for a tracking controller~\cite{urmson2008autonomous} for a fixed amount of time before a new motion is planned.

In general, the motion planning problem---the focus of this work---is intractable \cite{8618964} owing in part to potentially complex vehicle kinodynamic constraints that impede the calculation of two point boundary value (TPBV) problem solutions. Thus, simplifying assumptions are made. The variety in possible simplifications has given rise to several planning techniques. These typically fall into one of four categories \cite{gonzalez2015review}: sampling based planners, interpolating curve planners, numerical optimization approaches, and graph search based planners.

Lattice-based motion planning is an example of a graph search planner~\cite{gonzalez2015review} and is one of the most common approaches to solving the motion planning problem\cite{gonzalez2015review, tiger2021enhancing}.
It works by discretizing the vehicle's configuration space into a countable set (or \emph{lattice}) of regularly repeating configurations. Kinodynamically feasible motions between lattice configurations (or \emph{vertices}) are pre-computed and a subset of these motions called a \emph{control set} is selected~\cite{ tiger2021enhancing, pivtoraiko2005generating, pivtoraiko2009differentially, oliveira2018combining}. Elements of this control set (\blue{called \emph{motion primitives}}) can be concatenated in real time to form complex maneuvers \blue{as in Figure \ref{TspanEx}, where motion primitives (magenta) are concatenated end-to-end to produce a final motion (red)}. 

Lattice-based planners simplify the motion planning problem by limiting the set of available motions instead of simplifying their kinodynamics -- guaranteeing feasibility. \blue{Appropriately selected control sets can yield theoretical guarantees on sub-optimality with respect to free-space optimal while keeping the size of the search graph small~\cite{janson2018deterministic}}. These guarantees do not require the fineness of the lattice grid structure to approach infinity, which is typically required in sampling-based techniques. \blue{Lattice-based motion planning is particularly attractive in scenarios with complex kinodynamic constraints since computationally expensive TPBV problems between lattice vertices are solved offline.} 

Though the process of selecting an appropriate resolution for the lattice discretization is inarguably important, this work focuses primarily on the traversal of the lattice once it has been created. \blue{Further, we do not propose a method for solving TPBV problems between lattice vertices given kinematic constraints, preferring instead to keep the methods proposed herein general and applicable to any motion constraints.} Despite its many advantages, lattice-based motion planning is not without its critiques. This work presents novel solutions for three such critiques:

\begin{figure}[t]
\centering

 \includegraphics[width = 0.94\columnwidth]{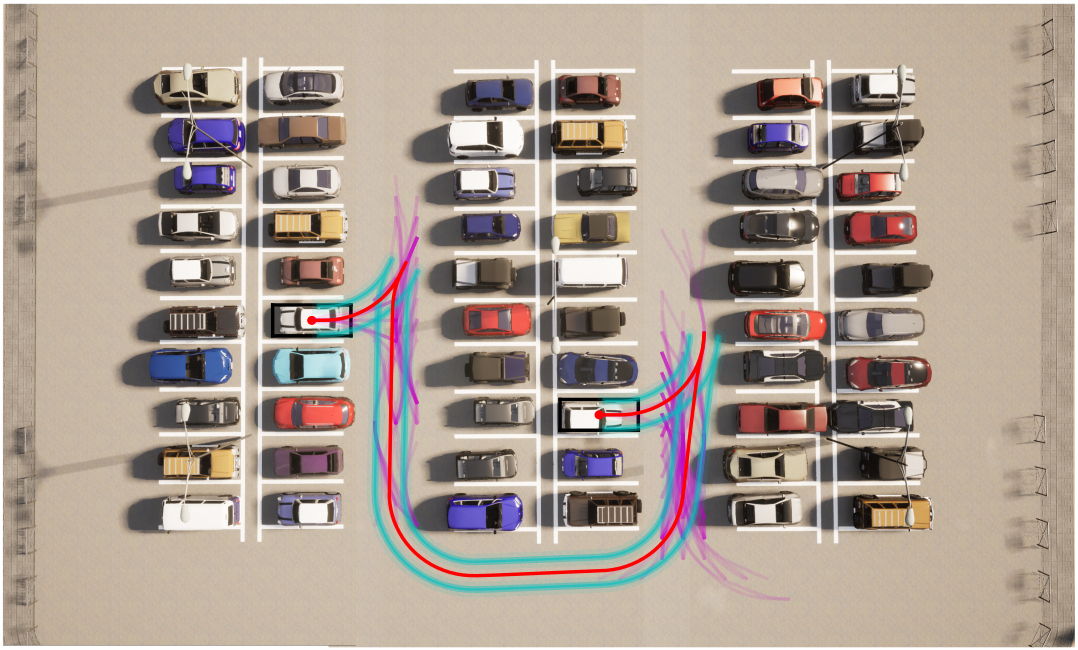}
 \caption{Motion planning using $t$-spanning $G3$ motion primitives. Magenta: primitives used, Red: final motion, Cyan: car footprint.}
 \label{TspanEx}
 
\end{figure}
\subsubsection{Choice of Control Set} It is observed in \cite{pivtoraiko2011kinodynamic, botros2019computing} that the \emph{number} of primitives in a control set favorably affects the quality of the resulting paths but adversely affects the run-time of an online search. The authors of \cite{pivtoraiko2011kinodynamic} introduce the notion of a $t$-spanning control set: a control set that guarantees the existence of compound motions -- motions formed by concatenating motion primitives -- whose costs are no more than a factor of $t$ from optimal. They propose to find the smallest set that, for a given value $t\geq 1$, will $t$-span a lattice thus optimizing a trade-off between path quality and performance. Computing such a control set is called the Minimum $t$-Spanning Control Set (MTSCS) problem which is known to be $NP$-hard \cite{botros2019computing}. \blue{Hitherto, the only known exact solver for the MTSCS problem involves a brute-force approach where each potential control set is considered~\cite{pivtoraiko2011kinodynamic}}.
\subsubsection{Continuity in All States \& On-lattice Propagation} Given a kinodynamically feasible motion originating from an initial configuration $p_s$, a new curve can be obtained from a configuration $p_s'$ by rotating and translating the original curve if $p_s$ and $p_s'$ differ only in position and orientation. However, this is no longer true if $p_s$ and $p_s'$ differ in higher order states like curvature. For this reason, position and orientation are called variant states while higher order states are invariant \cite{pivtoraiko2011kinodynamic}. When motion primitives are concatenated, invariant states may experience a discontinuous jump if all primitives are computed relative to a single starting vertex \cite{pivtoraiko2011kinodynamic}. Further, for lattices with non-cardinal headings, using a single starting vertex to define all motion primitives may result in off-lattice endpoints during concatenations, requiring rounding and resulting in sub-optimal motions \cite{rusu2014state, dolgov2010path}. A solution approach to this critique that we employ is to compute a control set for a lattice with several starting vertices \cite{pivtoraiko2011kinodynamic, tiger2021enhancing}. 
\subsubsection{Excessive Oscillations} Because the set of available motions is restricted in lattice-based motion planning, compound motions comprised of several primitives may possess excessive oscillations \cite{oliveira2018combining} resulting in motions which are kinodynamically feasible but zig-zag unnecessarily.

\subsection{Contributions}
The solutions proposed in this paper to the challenges above are summarized here.
\begin{itemize}[leftmargin=*]
  \item \textit{Choice of control set:} We propose a mixed integer linear programming (MILP) encoding of the MTSCS problem for state lattices with several starting vertices. This represents the first known non-brute force approach to solving the MTSCS problem. Though this formulation does not scale for large lattices, we observe that control sets need only be computed once, offline.
  \item \textit{Continuity in All States:} The solution to the proposed MILP is the smallest control set that generates motions that are continuous in all states with bounded $t$-factor sub-optimality. 
  \item \textit{Reduction of Excessive Oscillations:} We provide a novel algorithm that eliminates redundant vertices along motions computed using a lattice. This algorithm is based on shortest paths in directed acyclic graphs and eliminates excessive oscillations. It runs in time quadratic in the number of motion primitives along the input motion.
   \item \textit{Lattice Search Algorithm:} We present an A*-based algorithm to compute feasible motions for difficult maneuvers in both parking lot and highway settings. The algorithm accommodates off-lattice start and goal configurations up to a specified tolerance.
\end{itemize}
In our preliminary work~\cite{botros2019computing}, we introduced an MILP formulation for the MTSCS problem for lattices with a single starting vertex and proved the NP-completeness of this sub-problem. The novel contributions of this paper over our preliminary work include a MILP formulation of the MTSCS problem for a general lattice featuring several starting vertices, an A*-based search algorithm for use in lattice-based motion planning, and a post-processing smoothing algorithm to remove excessive oscillations from motions planned using a lattice. 

\subsection{Related Work}
 In this section, we review the four categories of planning techniques from \cite{gonzalez2015review}: sampling based, interpolating curve, numerical optimization, and graph search based (in particular lattice-based motion planning). We will outline key advantages to lattice-based motion planning as well as some of the deficiencies in the area that have motivated our contributions.
 
 Sampling based planners work by randomly sampling configurations in a configuration space and checking connectivity to previous samples. Common examples include Asymptotically Optimal Rapidly-exploring Random Trees (RRT*) \cite{karaman2011sampling}, and Probabilistic Roadmap (PRM) \cite{kavraki1996probabilistic}. These methods use a local planner to expand or re-wire a tree to include new samples. Therefore, many TPBV problems must be solved online necessitating the use of simplified kinodynamics. Kinodynamic feasibility may not be guaranteed \cite{gonzalez2015review}. Sampling-based planners, while not complete, can be \emph{asymptotically complete} with a convergence rate that worsens as the dimensionality of the problem is increased \cite{solovey2020revisiting}. To improve the performance of RRT* in higher dimensions, the authors of \cite{pendleton2017numerical} pre-compute a set of reachable configurations from which random samples are drawn. The pre-computed set is a lattice in which motions are computed by random sampling. Though appropriate for autonomous driving in unstructured environments like parking lots, sampling-based techniques are not widely used in highly structured environments like highways~\cite{claussmann2019review}. This paper proposes a planner that is appropriate in both settings.  

Techniques using the interpolating curve approach include fitting Bezier curves, Clothoid curves \cite{fuji2014trajectory}, or polynomial splines \cite{inproceedings} to a sequence of way-points. A typical criticism of clothoid-based interpolation techniques is the time required to solve TPBV problems involving Fresnel integrals \cite{gonzalez2015review, ravankar2018path}. While TPBV problems are typically easily solved in the case of polynomial spline and Bezier curve interpolation, it is often difficult to impose constraints like bounded curvature on these curves owing to their low malleability \cite{gonzalez2015review, ravankar2018path}. In \cite{schmerling2015optimal}, the authors develop a sampling-based planner that uses a control-affine dynamic model to facilitate solving TPBV problems. These motions are then smoothed via numeric optimization in \cite{zhu2015convex}. Here, motions with non-affine non-holonomic constraints and piece-wise linear velocity profiles are developed. The authors of \cite{zhu2015convex} illustrate the benefits of computing way-points using a system of similar complexity to the desired final motion. However, by the very nature of the simplification, it is possible for infeasible way-point paths to be developed. A further common critique of both interpolating curve and numerical optimization approaches is their reliance on global way-points \cite{gonzalez2015review, ravankar2018path}.

In contrast, lattice-based planners with motion primitives are capable of planning feasible trajectories \cite{dolgov2008practical} that do not rely on way-points \cite{gonzalez2015review}. In \cite{dolgov2008practical, rusu2014state}, the authors use motion primitives to traverse a lattice with states given by position and heading. In both these works, compound motions comprised of several primitives are discontinuous in curvature -- a known source of slip and discomfort \cite{levinson2011towards}. To improve smoothness, the authors of \cite{9062306} use the techniques of \cite{dolgov2008practical} to compute an initial trajectory that is then smoothed via numerical optimization. Similarly, in \cite{zhang2018autonomous}, paths from \cite{dolgov2008practical} are used as warm starts for an optimization-based collision avoidance algorithm for use is autonomous parking. The methods in \cite{9062306, zhang2018autonomous} require at least as much computation time as the methods proposed in \cite{dolgov2008practical}. In this work, we propose methods appropriate for both parking and highway driving scenarios that outperform the methods of \cite{dolgov2008practical} in both runtime and comfort metrics.

The choice of control set is of particular interest. Typically motion primitives are chosen to achieve certain objectives for the paths they generate. The authors of \cite{zhang2018hybrid} maximize comfort by computing motion primitives that minimize the integral of the squared jerk over the motion. However, they limit their results to forward motion limiting applicability in parking lot scenarios. Natural behavior is the objective in \cite{paraschos2013probabilistic} which introduces probabilistic motion primitives to achieve a blending of deterministic motion primitives. In this work, we illustrate a method of computing a control set of motion primitives given \emph{any} objective. In \cite{oliveira2018combining} the authors consider the control set to be the entirety of the lattice which may necessitate \blue{coarser} lattices for the sake of time efficiency. The notion that some motions in a lattice are sufficiently similar to others to not be included in a control set motivates the work in \cite{jarin2021dispersion} which introduces a method of computing a control set is using the dispersion minimizing algorithm from \cite{palmieri2019dispertio}.

In \cite{pivtoraiko2005generating}, on the other hand, the authors present the notion of using a MTSCS of motion primitives -- a control set that balances a trade-off between \blue{its size and the optimality of resulting paths}. These are similar to graph $t$-spanners first proposed in~\cite{peleg1989graph}. The process of computing such a set is elaborated in \cite{botros2020learning} where we compute motions between lattice vertices that minimize a user-specified cost function that is learned from demonstrations. The methods presented herein may be used in tandem with other work \cite{botros2020learning, de2019learning} to provide an improved set of user-specific motion primitives.

In~\cite{pivtoraiko2011kinodynamic}, the authors present a heuristic for the MTSCS problem which, though computationally efficient, does not have any known sub-optimality factor guarantees. Since a control set may be computed once, offline, and used over many motion planning problems, the time required to compute this control set is of less importance than its size.

\section{Lattice Planning \& \blue{Problem Statement}}
\blue{We begin with a review of lattice-based motion planning and outline the problems with the approach that are addressed in this paper}.

Let $\X$ denote the configuration space of a vehicle. That is, $\X$ is a set of tuples --- called \emph{configurations} --- whose entries are the \emph{states} of the vehicle. A \emph{motion} from an initial configuration $p_s\in\X$ to a goal $p_g\in\X$ is a path in $\X$ beginning at $p_s$ and terminating at $p_g$. Let $\mathcal{M}$ denote the set of all kinodynamically feasible motions in $\X$ -- that is, the motions that adhere to a known kinodynamic model. We assume that each motion in $\mathcal{M}$ can be evaluated using a known cost function $c:\mathcal{M}\rightarrow \mathbb{R}_{\geq 0}$. Though $c$ is left general, we do assume that it obeys the triangle inequality. The motion planning problem is as follows.
\begin{problem}[Motion planning problem (MPP)]\label{MPP} Given $\X$, a set of obstacles $\ChiObs$, start and goal configurations $p_s, p_g\in \X 
 \backslash   \ChiObs$, and a cost function $c$, compute a motion from $p_s$ to $p_g$ that is feasible and optimal:
\begin{enumerate}
  \item \textbf{Feasible motions:} Each configuration in the motion is in $\X  \backslash\ChiObs$, and the motion is kinodynamically feasible.
  \item \textbf{Optimal motions:} The motion minimizes $c$ over all feasible motions from $p_s$ to $p_g$.
\end{enumerate}
\end{problem}
Lattice-based planning approximates a solution to this problem by regularly discretizing $\X$ using a \emph{lattice}, $L\subseteq\X$. Let $i, j$ be vertices in $L$, and let $p$ be a motion that solves Problem \ref{MPP} for $p_s=i, p_g=j$, written $i\cdot p = j$. The key observation in lattice-based motion planning is that $p$ may be applied to other vertices in $L$ potentially resulting in another vertex in $L$. \blue{This motivates the use of a starting vertex $s\in L$, usually at the origin, that represents a \emph{prototypical} lattice vertex. Offline, a feasible optimal motion is computed from $s$ to all other vertices $i\in L-\{s\}$ in the absence of obstacles. A subset $E$ \blue{--called a \emph{control set} --} of these motions is selected and used as an action set during an online search. The motions available at any iteration of the online search are isometric to those available at $s$ (called \emph{motion primitives})}. \blue{In this work, we address three elements of lattice-based motion planning:
    \paragraph{\textbf{Continuity In All States}}  Care must be taken to ensure that states vary continuously over concatenations of motions. For example, suppose velocity is a state, starting vertex $s$ is located at the origin (\emph{i.e.,} the velocity at $s$ is 0) and $p$ is a motion from $s$ to $i\in L$. If $i$ does not have 0 velocity, then no motion originating at $s$ can be applied to $i$ without incurring a discontinuity in velocity. To ensure state continuity, we employ a \emph{multi-start} lattice described in further detail in Section \ref{sec:MSL}.
    \paragraph{\textbf{Control Set Selection}} As discussed in the Introduction, the choice of control set influences both the performance of an online search of the resulting graph as well as the quality of the motions computed. In Section \ref{sec:Choosing_CS}, we formulate the problem of selecting a control set as a MTSCS problem in the context of a multi-start lattice. A mixed integer linear encoding of the problem is then proposed in Section \ref{sec:MILP}. 
    \paragraph{\textbf{Graph Traversal}} Motion primitives induce a search graph whose vertices are lattice vertices and whose edges are motions between vertices. In Section \ref{sec:AstarVar}, we present a variant of the A* algorithm for multi-start lattices. Further, in Section \ref{sec:Smoothing}, we propose a smoothing algorithm to remove excessive oscillations which often arise in lattice-based motion planning.
}
\begin{figure}[t]
\centering
  \includegraphics[width = 0.75\linewidth]{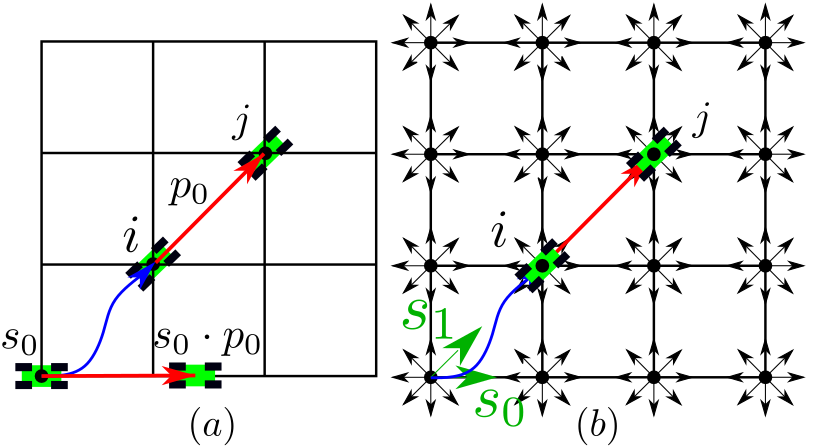}
  \caption{(a) Config. space, lattice and start from \eqref{BadX}. (b) Lattice and $\X$ as in (a) with start set $\s = [s_0=(0, 0, 0), s_1=(0, 0, \pi/4)]$. Blue motion from $s_0$ to $i=(1, 1, \pi/4)$, red from $i$ to $j=(2, 2, \pi/4)$.}
  \label{MultiStart}
\end{figure}

\section{Multi-Start Lattices}
\label{sec:MSL}
 There are examples for which a single start is insufficient to capture the full variety of motions \blue{if motions are constrained to be continuous in all states}. For example, let
\begin{equation}
\label{BadX}
\begin{split}
\X = &\mathbb{R}^2\times[0, 2\pi],\
L = \mathbb{Z}^2\times \{i\pi/4, i=0,\dots,7\},\\
s_0 = &(0, 0, 0),
\end{split}
\end{equation}
 denote configurations space, lattice, and starting vertex, respectively -- as in Figure \ref{MultiStart} (a). If $i=(1, 1, \pi/4)\in L$ and $j=(2, 2, \pi/4)\in L$, then the motion $p_0$ from $i$ to $j$ is such that $s_0\cdot p_0=(\sqrt{2}, 0, 0)\notin L$ \blue{if states $x,y,\theta$ vary continuously (as illustrated in Figure \ref{MultiStart} (a)). Therefore, $p_0$ is not an action available at $s_0$ implying that the simple diagonal motion $p_0$ will not appear in any action set $E$. }

We propose a solution to this problem by way of a multi-start lattice. Given a set of starts $\s\subset L$, vertices $i,j\in L$ and motion $p$ from $i$ to $j$, we say that $i\cdot p$ is a \emph{valid concatenation}, and write $i\oplus p$ if there exists a start $s\in \s$ such that $s\cdot p \in L$. In the example in Figure \ref{MultiStart} (b), let $\s=[s_0=(0, 0, 0), s_1 = (0, 0, \pi/4)]$, then the motion $p_0$ from $i$ to $j$ is such that $s_1\cdot p_0=i\in L$ implying $i\oplus p_0$ is a valid concatenation.

An ideal set of starts $\s$ would have the property that for every pair of vertices $i,j\in L$ with motion $p$ from $i$ to $j$, the concatenation $i\cdot p$ is valid. Obtaining such a set is simple for a class of configuration spaces described here. For motion planning for autonomous cars and car-like robots, we typically use a configuration space of the form
\begin{equation}
  \label{ConfigSpace}
  \X =\mathbb{R}^2\times[0, 2\pi)\times U_1\dots\times U_N,
\end{equation}
where $(x,y)\in \mathbb{R}^2$ represents the planar location of the vehicle, $\theta\in [0, 2\pi)$ the heading, and $u_i, i=1\dots N$ represent higher order states in a state space $u_i\in U_i\subset \mathbb{R}$. These states $u_i$ may include curvature, curvature rate, velocity, acceleration, jerk, additional angles, etc. We make the assumption that each state $U_i$ is bounded by physical constraints of the vehicle, passenger comfort, or speed limits. This allows us to write $U_i=[U_i^0, U_i^1]\subset \mathbb{R}$ for some upper and lower limits $U_i^0, U_i^1\in \mathbb{R}$. For $\X$ in \eqref{ConfigSpace}, we construct a lattice $L\subseteq \X$ by discretizing each state separately. In particular, for $\alpha, \beta \in \mathbb{R}_{>0}$, and $n_0, n_1, n_2, m_i\in \mathbb{N}$ for $i=1\dots N$, we let
\begin{equation}
\label{Lattice}
  \begin{split}
    L = &\{i\alpha, i=-n_0\dots n_0\} 
    \times \{i\beta, i=-n_1\dots n_1\}\\
    \times &\{\pi i/2^{n_2 - 1}, i=0\dots 2^{n_2} - 1\}\\
    \times &\prod_{i=1}^N \{U_i^0 + j(U_i^1 - U_i^0)/m_i\}_{j=0}^{m_i}.
  \end{split}
\end{equation}
This lattice samples $2n_0 + 1$, and $2n_1 + 1$ values of the $x, y$ coordinates, respectively, with spacing $\alpha, \beta$ between samples, respectively.  It partitions heading values $[0, 2\pi)$ and $U_i\subset[U_i^0, U_i^1]\subset\mathbb{R}, i=1\dots,N$ into $2^{n_2}$ and $m_i+1$ evenly spaced samples, respectively. For $\X, L$ from \eqref{ConfigSpace}, \eqref{Lattice}, let
\begin{equation}
\label{start set}
  \begin{split}
    \s = \Big\{(&0, 0, \theta, u_1, \dots u_N) \ : \\
        & \theta \in \{j \pi/2^{n_2 - 1}, j=0, \dots, 2^{n_2 - 2} - 1 \},\\
        & u_i \in \{U_i^0 + j(U_i^1-U_i^0)/m_i, j=0\dots m_i\} \Big\}.
  \end{split}
\end{equation}
This starting set has the property that for any vertices $i, j \in L$, the motion $p$ from $i$ to $j$ is such that $\exists s\in \s$ with $s\oplus p \in L$.

\section{\blue{Selecting a Control Set}}
\label{sec:Choosing_CS}
In this section, we \blue{describe our proposed approach for selecting a control set and} formulate the Minimum $t$-Spanning Control Set (MTSCS) problem \blue{for multi-start lattices}. We assume that for all $i,j\in \X$, there is a motion $p$ solving Problem \ref{MPP} from $i$ to $j$ in the absence of obstacles and that the cost of this motion, $c(p)$, is known, non-negative, and is equal to 0 if and only if $i=j$. Further, we assume that $c$ obeys the triangle inequality -- i.e., if $p_1$ is a motion from $i\in \X$ to $j\in \X$, and $p_2$ is a motion from $i$ to $j$ by way of $r\in \X$, then $c(p_1)\leq c(p_2)$. This cost is left general, and may represent travel time, comfort, etc.

Given $(\X, L, \s, c)$ -- configurations space, lattice, starting set and cost of motions, respectively, the idea behind multi-start control set motion planning is the following. For each $s\in \s$, pre-compute a set $\mathcal{B}_s$ of cost-minimizing feasible motions from $s$ to vertices $i\in L-\s$, and let $\mathcal{B}=\bigcup_{s\in\s}\mathcal{B}_s$. We select a subset $E\subseteq \mathcal{B}$ to use as an action set during an online search in the presence of obstacles. We offer the following definition:
\begin{definition}[Relative start] For vertex $i\in L$, the \emph{relative start} of $i$ is the vertex $R(i)\in \s$ such that for each $j\in L$, if $p_j$ is the motion from $R(i)$ to $j$ then $i\oplus p_j$ is valid.
\end{definition}
To construct a set $E\subseteq\mathcal{B}$, we choose a set $E_s$ to be the action set for lattice vertices with relative start $s$, and let $E=\bigcup_{s\in\s} E_s$. If $L$ and $\s$ are from \eqref{Lattice}, \eqref{start set}, respectively, then the relative start of any configuration $i=(x, y, \theta, u_1,\dots,u_N)\in L$ is $R(i)=(0, 0, \theta', u_1,\dots,u_N)$ where 
\begin{equation*}
  \theta' = \frac{\pi}{2^{n_2-1}} \left(\frac{2^{n_2 -1}\theta}{\pi}\mod{4}\right).
\end{equation*}
The set of available motions during an online search --- i.e., the action set --- at a configuration $i\in L$ is given by $E_{R(i)}$, and the cost of each motion $p\in E_{R(i)}$ is given by $c(p)$. 

For any subset $E\subseteq \mathcal{B}$, we denote by $\bar{E}$ the set of all tuples $(i,j)\in L^2$ such that if $p$ is the optimal feasible motion from $i$ to $j$, then $p\in E_{R(i)}$:
\begin{equation}
\label{BarE}
  \begin{split}
    \bar{E} &= \{(i,j)\in L^2 : (i\oplus p = j)\Rightarrow p\in E_{R(i)}\}.
  \end{split}
\end{equation}
Critically, solving the motion planning problem between lattice vertices using a control set $E$ is equivalent to computing a shortest path in the weighted directed graph $G=(L, \bar{E}_{\text{CF}}, c)$ where $\bar{E}_{\text{CF}}\subseteq \bar{E}$ is the set of all collision-free edges in $\bar{E}$, and for all $(i,j)\in \bar{E}_{\text{CF}}$, $c((i, j))$ is the cost of the optimal feasible motion from $i$ to $j$. This graphical representation of motion planning motivates the following definitions given the tuple $(\X, L, \s, c, E)$: 
\begin{definition}[Path using $E$]
A \emph{path using} $E$ from $s\in\s$ to $j\in L$, denoted $\pi^E(s, j)$ is the cost-minimizing path from $s$ to $j$ (ties broken arbitrarily) in the weighted, directed graph $G_{\text{Free}}=(L, \bar{E}, c)$ in the absence of obstacles . 
\end{definition}
\begin{definition}[Motion cost using $E$] The cost of a motion \emph{using} $E$ from $s\in\s$ to $j\in L$, denoted $d^E(s,j)$ is the cost of the path $\pi^E(s,j)$ in $G_{\text{Free}}=(L, \bar{E}, c)$.
\end{definition}
Let $i\in L-\s, s\in \s$ and $p$ a cost minimizing motion from $s$ to $i$. If $E=\mathcal{B}$, then  $d^E(s,j)=d^{\mathcal{B}}(s, j)=c(p)$ implying that using $\mathcal{B}$ as a control set will result in minimal cost paths. However, if $\mathcal{B}$ is large, the branching factor at a vertex $i\in L$ during an online search may be prohibitive. We therefore wish to limit the size of $|E_{R(i)}|$ while keeping $d^E(s,j)$ close to $d^{\mathcal{B}}(s, j)$ for all $j\in L$. This motivates the following definition:

\begin{definition}[$t$-Error]
Given the tuple $(\X, L, \s, c, E)$, the $t$-\emph{error} of $E$ is defined as
$$
t\text{Er}(E)=\max_{\substack{s\in\s \\ j\in L-\s}} \ \frac{d^E(s,j)}{d^{\mathcal{B}}(s,j)}.
$$
\end{definition}
That is, the $t$-error of a control set $E$ is the worst-case ratio of the distance using $E$ from a start $s$ to a vertex $j$ to the cost of the optimal path from $s$ to $j$ over all $s\in\s, j\in L$. The $t$-error can be used to evaluate the quality of a control set $E$:
\begin{definition}[$t$-Spanning Set]
Given the tuple $(\X, L, \s, c, E)$, and a real number $t\geq 1$, we say that a set $E$ is a $t$-\emph{spanner} of $L$ (or $t$-\emph{spans} $L$), if $t\emph{Er}(E)\leq t$. 
\end{definition}
Our objective is to compute a control set $E$ that optimizes a trade-off between branching factor and motion quality. This is formulated in the following problem:
\begin{problem}[Minimum $t$-spanning Control Set Problem] 
\label{MCSP}
\textbf{Input:} A tuple $(\X, L, \s, c)$, and a real number $t\geq 1$.
\\
\textbf{Output:} A control set $E=\bigcup_{s\in\s}E_s$ that $t$-spans $L$ where $\max_{s\in \s}|E_s|$ is minimized.
\end{problem}
Using a solution $E$ to Problem \ref{MCSP} as a control set has two beneficial properties. First, the number of collision-free neighbors of any vertex $i\in L$ in the graph $G$ is at most $|E_{R(i)}|$ whose maximum value is minimized. Second, for every $i\in L-\s, s \in \s$, it must hold that $d^E(s,i)\leq td^{\mathcal{B}}(s,i)$. Thus, Problem \ref{MCSP} represents a trade-off between branching factor and motion quality. As an alternative, the framework presented here can compute a $t$-spanning that minimizes $|E|$ instead of $\max_{s\in \s}|E_s|$ though this does not ensure a minimum branching factor at each vertex.  

\section{Proposed Methods}
In this section we present a MILP formulation of Problem \ref{MCSP}, and algorithms to compute and smooth motions using computed control sets.
\subsection{Multi-Start MTSCS Problem: MILP Formulation}
\label{sec:MILP}

In \cite[Theorem 5.2.1]{botros2021lattice}, we show that Problem \ref{MCSP} is NP-hard. Informally, there is no known polynomial time algorithm for solving any problem in this class (if one
existed, then it would prove P = NP). Thus, problems in NP-hard are widely believed to be intractable beyond a certain problem size. \blue{Motivated by this, we pose the problem as a MILP, an NP-hard class of problems. Several powerful MILP solvers are available, allowing many problems to be solved to optimality. These solvers are also anytime, returning feasible solutions along with sub-optimality certificates when terminated early.} Let $(\X, L, \s, c)$ denote configuration space, lattice, start set, and cost of vertex-to-vertex motions in $L$, respectively. For any motion $q\in \mathcal{B}=\bigcup_{s\in \s}\mathcal{B}_s$, let
$$
S_q=\{(i,j) \ : \ i,j\in L, \ i\oplus q =j\}.
$$ 
Thus $S_q$ is the set of all pairs $(i,j)\in L^2$ such that $q$ is a cost-minimizing feasible motion from $i$ to $j$ and $i\cdot q$ is valid. By definition of valid concatenations, there exists $s\in \s$ and $j'\in L$ such that $q$ is a cost-minimizing feasible motion from $s$ to $j'$ (i.e., $s=R(i)$) implying that $(s, j')\in S_q$.

Let $E=\bigcup_{s\in \s}E_s$ be a solution to Problem \ref{MCSP}. Let $G_{\text{Free}}=(L, \bar{E}, c)$ be the weighted directed graph with edges $\bar{E}$ given in \eqref{BarE}. For each $s\in\s$ and each $(i,j)\in \bar{E}$, we make a copy $(i,j)^s$ of the edge. This allows us to treat edges differently depending on the starting vertex of the path to which they belong. For each $r\in L$, a path using $E$ from $s$ to $r$, $\pi^E(s, r)$, may be expressed as a sequence of edges $(i,j)^s$ where $(i,j)\in \bar{E}$. For each $s\in \s$ we construct a new graph \begin{equation}
\label{Tree}
T^s=(L, E_T^s),
\end{equation}
whose edges $E_T^s$ are defined as follows: let
$$
\mathcal{P}^{s} = \bigcup_{i\in L-\s}\pi^E(s,i).
$$
Thus $\mathcal{P}^{s}$ is the set of all minimal cost paths from $s$ to vertices $i\in L-\s$ in the graph $G_{\text{Free}}$. These paths are expressed as a sequence of edge copies $(i,j)^s$ where $(i,j)\in \bar{E}$. For each $i\in L-\s$, and for each $s\in\s$, if $\mathcal{P}^{s}$ contains two paths $\pi^E_1, \pi^E_2$ from $s$ to $i$, determine the last common vertex $j$ in paths $\pi^E_1, \pi^E_2$, and delete the the copy of the edge in $\pi^E_2$ whose endpoint is $j$ from $\mathcal{P}^{s}$. The remaining edges are the set $E_T^s$. The graph $T^s$ has a useful property defined here:

\begin{definition}[Arborescence]
From Theorem 2.5 of~\cite{korte2012combinatorial}, a graph $T$ with a vertex $s$ is an arborescence rooted at $s$ if every vertex in $T$ is reachable from $s$, but deleting any edge in $T$ destroys this property.
\end{definition}
Intuitively, if $T$ is an arborescence rooted at $s$, then for each vertex $j\neq s$ in $T$, there is a unique path in $T$ from $s$ to $j$. For each $s\in \s$, the graph $T^s$ an arborescence rooted at $s$. This is shown in the following Lemma:
\begin{lemma}[Arborescence Lemma]
\label{TREELEM}
Let $E=\bigcup_{s\in \s}E_s$ be a solution to Problem \ref{MCSP}, and $G_{\text{Free}}=(L, \bar{E}, c)$. For each $s\in \s$, $T^s$ (in \eqref{Tree}) is an arborescence rooted at $s$. Further, $\forall i\in L-\s$, $d^E(s, i)$ is the length of the path in $T^s$ to $i$.
\end{lemma}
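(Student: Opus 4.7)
To verify that $T^s$ is an arborescence rooted at $s$, I would check the two defining properties: (i) every $i\in L$ is reachable from $s$ in $T^s$, and (ii) removing any edge of $T^s$ destroys this reachability, which, together with (i), is equivalent to saying that every non-root vertex has exactly one incoming edge in $T^s$. The plan is to show both properties follow directly from the deletion procedure defining $E_T^s$, and then to recover the length identity from the optimal substructure of shortest paths.

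For (i), $\mathcal{P}^{s}$ already contains, by construction, an $s$-to-$i$ path for every $i \in L-\s$, namely $\pi^E(s,i)$; the only edges removed by the procedure are ``extra'' copies of incoming edges at a vertex that is the terminus of two paths in $\mathcal{P}^{s}$. I would induct on $d^E(s,i)$ (equivalently, on the number of edges in $\pi^E(s,i)$) to show that after the deletion every $i \in L-\s$ still has at least one incoming edge in $T^s$, whose tail is a vertex already proven reachable. For (ii), the deletion rule explicitly removes one of every pair of incoming copies at a vertex $j$, iterated until none remain; thus every non-root vertex ends with in-degree exactly one, which combined with (i) gives the arborescence structure.

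For the cost claim, every edge retained in $T^s$ lies on some $\pi^E(s,j)$, a minimal-cost path from $s$ to $j$ in $G_{\text{Free}}$. Since $c$ is non-negative and obeys the triangle inequality, the prefix of $\pi^E(s,j)$ terminating at any intermediate vertex $i$ is itself a minimal-cost path from $s$ to $i$ (any cheaper $s$-to-$i$ path, completed by the suffix from $i$ to $j$, would contradict the optimality of $\pi^E(s,j)$). Hence the unique $T^s$-path from $s$ to $i$ is a min-cost path, and its length equals $d^E(s,i)$. The main obstacle I anticipate is formalizing the deletion procedure cleanly: as stated, it processes pairs of paths one at a time, so one must verify that the process terminates, that it does not inadvertently orphan any vertex, and that the resulting in-degree-one property is independent of the order in which deletions are performed. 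Once this is in place, both (i)--(ii) and the length identity follow immediately from the construction and from the min-cost property of the $\pi^E(s,\cdot)$.
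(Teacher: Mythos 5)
Your proposal follows essentially the same route as the paper's proof: reachability of every vertex from $s$ via the $t$-spanning property (so $\pi^E(s,i)\in\mathcal{P}^s$ survives the deletions), uniqueness of paths --- equivalently, in-degree one at every non-root vertex --- from the duplicate-deletion rule defining $E_T^s$, and the cost identity from the fact that the surviving path to $i$ is a minimum-cost path of cost $d^E(s,i)$. The extra care you propose (inducting on path length to rule out orphaned vertices, and the optimal-substructure argument for prefixes) fills in details the paper's terser proof leaves implicit, but the underlying argument is the same.
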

\begin{proof}
Let $s\in \s$. To show that $T^s$ is an arborescence rooted at $s$, observe that there is a path in $T^s$ from $s$ to all $i\in L$. Indeed, if $E$ solves Problem \ref{MCSP}, then $E$ $t$-spans $L$ and there must be at least one path, $\pi^E(s,i)$ using $E$ from $s$ to each $i\in L$ implying that $\pi^E(s,i)\in \mathcal{P}^{s}$. Since $E_T^s$ deletes duplicate paths from $\mathcal{P}^{s}$, there must be precisely one path in $T^s$ from $s$ to $i$ implying that $T^s$ is an arborescence rooted at $s$. The cost of the path in $T^s$ from $s$ to $i$ is defined as the cost of the path $\pi^E(s,i)$ which is $d^E(s, i)$.
\end{proof}
Lemma \ref{TREELEM} implies that $E$ is a $t$-spanner of $L$ if and only if $\forall s\in \s$ there is a corresponding arborescence $T^s$ rooted at $s$ whose vertices are $L$, whose edges $(i,j)^s$ are copies of members of $S_q$ for a $q\in E$, and where the cost of the path from $s$ to any $i\in L$ in $T^s$ is no more than a factor of $t$ from optimal. Indeed, the forward implication follows from Lemma \ref{TREELEM}, while the converse holds by definition of a $t$-spanner. From this, we develop four criteria that represent necessary and sufficient conditions for $E$ to $t$-span $L$:
\begin{description}
\item[Usable Edge Criteria:] For any $q\in \mathcal{B}$, for any $s\in \s$, and for any $(i,j)\in S_q$, the copy $(i,j)^s$ may belong to a path in $T^s$ from $s$ to a vertex $r\in L$ if and only if $q\in E_{R(i)}$.
\item[Cost Continuity Criteria:]
For any $s\in \s$, $q\in \mathcal{B}$, $(i,j)\in S_q$, and $(i,j)^s$ a copy of $(i,j)$, if $(i,j)^s$ lies in the path in $T^s$ to vertex $j$ then $c(\pi^E(s, j))=c(\pi^E(s, i)) + c(q)$. That is, the cost of the path from $s$ to $j$ in $T^s$ is equal to the cost of the path from $s$ to $i$ plus the motion from $i$ to $j$.
\item[$t$-Spanning Criteria:] For any vertex $j\in L-\s$, and $s\in\s$, the cost of the path in $T^s$ from $s$ to $j$ can be no more than $t$ times the cost of the direct motion from $s$ to $j$.
\item[Arborescence Criteria:]
The set $T^s$ must be an arborescence for all $s\in \s$.
\end{description}
\par
We now present an MILP encoding of these criteria. Let $|L|=n$ with all vertices enumerated $1,2,...,n$ with $s\in \s$ taking values $1,\dots,m$ for $m\leq n$. For any control set $E=\bigcup_{s\in\s}E_s$, define $m(n-m)$ decision variables:
$$
y_q^s = \begin{cases}
1, \ \text{if } q\in E_s\\
0, \ \text{otherwise}.
\end{cases}, \ q=m+1,\dots,n
$$
For each edge $(i,j)\in L^2$, and each $s\in\s$ let 
$$
x_{ij}^s=\begin{cases}
1 \ &\text{if } (i,j)^s\in T^s \\
0 \ &\text{otherwise}. 
\end{cases}
$$
That is, $x_{ij}^s=1$ if $(i,j)^s$ (the copy of the edge $(i,j)$ for start $s\in\s$) lies on a path from $s$ to a vertex in the lattice.
Let $z_i^s$ denote the length of the path in the tree $T^s$ to vertex $i$ for any $i\in L$, $c_{ij}$ the cost of the optimal feasible motion from $i$ to $j$, and let $L'=L-\s$. The criteria above can be encoded as the following MILP:
\begin{subequations}
\label{MILP1}
\begin{align}
\label{OF}
&\min_{K\in\mathbb{R}} \ K\\
&s.t. \ \forall s\in \s \notag \\
\label{ControsSetSize}
& \sum_{q\in L'}y_q^s \leq K, \\
\label{XY}
&x_{ij}^{s'}-y_q^s\leq 0, \  \ &&\forall(i,j)\in S_q, \\ & &&\forall q\in\mathcal{B}, \ \forall s'\in\s, \notag \\
\label{COST}
&z_i^s+c_{ij}-z_j^s\leq M_{ij}^s(1-x_{ij}^s), \ &&\forall (i,j)\in L\times L'\\
\label{COST3}
&z_j^s\leq tc_{sj}, \ &&\forall j\in L'\\
\label{TREE}
&\sum_{i\in L}x_{ij}^s=1, \ &&\forall j\in L'\\
\label{Boolx2}
&x_{ij}^s\in \{0,1\}, \ &&\forall (i,j)\in L\times L'\\
\label{Booly}
&y_q^s\in\{0,1\}, \ &&\forall q\in \mathcal{B},
\end{align}
\end{subequations}
where $M_{ij}^s=tc_{si}+c_{ij}-c_{sj}$. The objective function \eqref{OF} together with constraint \eqref{ControsSetSize} minimizes $\max_{s\in \s}|E_s|$ as in Problem \ref{MCSP}. The remainder of the constraints encode the four criteria guaranteeing that $E$ is a $t$-spanning set of $L$:

\textbf{Constraint (\ref{XY}):} Let $q$ be the motion in $\mathcal{B}$ from $s\in\s$ to $j\in L-\s$. If $q\not\in E_s$, then $y_q^s=0$ by definition. Therefore, (\ref{XY}) requires that $x_{ij}^{s'}=0$ for all $(i,j)\in S_q, s'\in \s$. Alternatively, if $y_q^s=1$, then $x_{ij}^{s'}$ is free to take values $1$ or $0$ for any $(i,j)\in S_q, s'\in \s$. Thus constraint (\ref{XY}) encodes the Usable Edge Criteria.

\textbf{Constraint (\ref{COST}):}
Constraint (\ref{COST}) takes a similar form to~\cite[Equation~(3.7a)]{desrosiers1995time}. Note that $M^s_{ij}\geq 0$ for all $(i,j)\in \mathcal{B}, s\in\s$. Indeed, $\forall t\geq 1$, $M_{ij}^s\geq c_{si}+c_{ij}-c_{sj}$, and $c_{si}+c_{ij}\geq c_{sj}$ by the triangle inequality. Replacing $M_{ij}^s$ in (\ref{COST}) yields
\begin{equation}
\label{cc}
z_i^s+c_{ij}-z_j^s\leq (tc_{si}+c_{ij}-c_{sj})(1-x_{ij}).
\end{equation}
If $x_{ij}^s=1$, then $(i,j)^s$ is on the path in $T^s$ to vertex $j$ and (\ref{cc}) reduces to $z_j^s\geq z_i^s+c_{ij}$ which encodes the Cost Continuity Criteria. If, however, $x_{ij}^s=0$, then (\ref{cc}) reduces to $z_{i}^s-z_j^s\leq tc_{si}-c_{sj}$ which holds trivially by constraint (\ref{COST3}) and by noting that $z_j^s\geq c_{sj}, \forall j\in L$ by the triangle inequality. 

\textbf{Constraint (\ref{TREE}):} Constraint (\ref{TREE}) together with constraint (\ref{COST}) yield the Arborescence Criteria. Indeed for all $s\in \s$, by Theorem 2.5 of~\cite{korte2012combinatorial}, $T^s$ is an arborescence rooted at $s$ if every vertex in $T^s$ other than $s$ has exactly one incoming edge, and $T^s$ contains no cycles. The constraint (\ref{TREE}) ensures that every vertex in $L'$, which is the set of all vertices in $T^s$ other than those in $\s$, have exactly one incoming edge, while constraint (\ref{COST}) ensures that $T^s$ has no cycles. Indeed, suppose that a cycle existed in $T^s$, and that this cycle contained vertex $i\in L'$. Suppose that this cycle is represented as
\[
i\rightarrow j\rightarrow \dots\rightarrow k\rightarrow i.
\]
Recall that it is assumed that the cost of any motion between two different configurations in $\X$ is strictly positive. Therefore, (\ref{COST}) implies that $z_i^s<z_j^s$ for any $(i,j)\in L^2$. Therefore, 
\[
z_i<z_j<\dots<z_k<z_i,
\]
which is a contradiction.

 \begin{figure*}[t]
\centering
  \includegraphics[width = 0.9\linewidth]{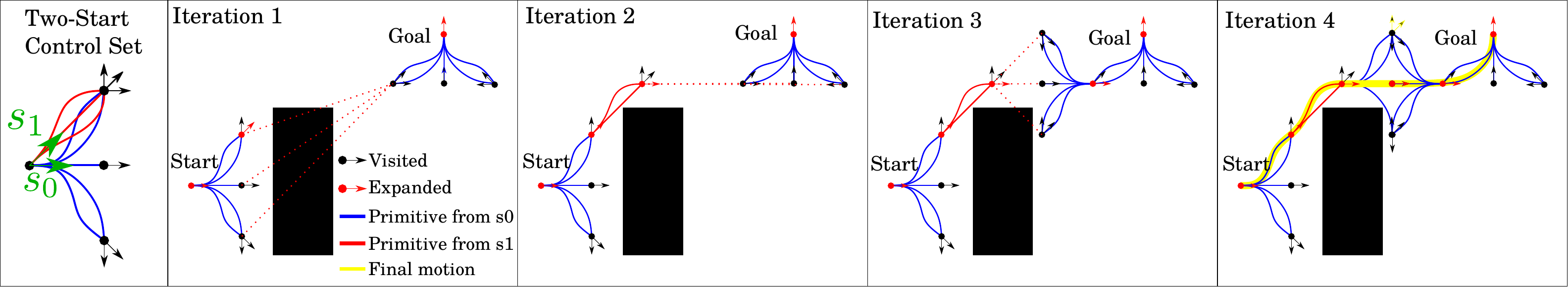}
  \caption{Example motion planning using PrAC for a 2-start lattice.}
  \label{UsingPrAC}
\end{figure*}
 
\subsection{Motion Planning With a MTSCS}
\label{sec:AstarVar}
 The previous section illustrates how to compute a control set $E=\bigcup_{s\in\s}E_s$ given the tuple $(\X, L, \s, c)$. We have also presented a description of how such a control set may be used during an online A*-style search of the graph $G=(L, \bar{E}_{\text{CF}}, c)$ from a start vertex $p_s\in L$ to a goal vertex $p_g\in L$. Here, the edge set $\bar{E}_{\text{CF}}$ is the set of pairs $(i,j)\in L^s$ such that there exists a collision-free motion $p\in E_{R(i)}$ from $i$ to $j$ where $R(i)\in \s$ is the relative start of $i$. The motion primitives in this control set could be used in any graph-search algorithm. In this section, we propose one such algorithm: an A* variant, \emph{Primitive A* Connect} (PrAC) for path computation in $G$. The algorithm follows the standard A* algorithm closely with some variations described here:
 
\textbf{Weighted fScore:} In the standard A* algorithm, two costs are maintained for each vertex $i$: the \emph{gScore}, $g(i)$ representing the current minimal cost to reach vertex $i$ from the starting vertex $p_s$, and the $\emph{fScore}$, $f(i) = g(i) + h(i)$ which is the sum of the gScore and an estimated cost to reach the goal vertex $p_g$ by passing through $i$ given heuristic $h$. We implement the weighted fScore from \cite{pohl1970heuristic}: Given a value $\lambda\in[0, 1]$, we let $a=0.5\lambda, b= 1-0.5\lambda$, and we define a new cost function $f'=ag(i) + bh(i)$. If $\lambda = 1$, then both gScore and heuristic are weighted equally and standard A* is recovered. However, as $\lambda$ approaches 0, more weight is placed on the heuristic which promotes \emph{exploration} over \emph{optimization}. While using a value $\lambda < 1$ eliminates optimality guarantees, it also empirically improves runtime performance. In practice, using $\lambda=1$ works well for maneuvers where $p_s$ and $p_g$ are close together, like parallel parking, while small values of $\lambda$ work well for longer maneuvers like traversing a parking lot. 

 \textbf{Expanding Start and Goal Vertices:} We employ the bi-directional algorithm from \cite{chen2017front} with the addition of a direct motion. In detail, we expand vertices neighboring both start and goal vertices and attempt to connect these vertices on each iteration. In essence, we double the expansion routine at each iteration of A* once from $p_s$ to $p_g$ and once from $p_g$ to $p_s$ with reverse orientation. We maintain two trees, one rooted at the start vertex $p_s$, denoted $T_s$, the other at the goal, $T_g$ whose leaves represent open sets $O_s, O_g$, respectively. We also maintain two sets containing the current best costs $g(p_s, i)$ to get from $p_s$ to each vertex $i\in T_s$, and from $p_g$ to each vertex $j\in T_g$ (traversed in reverse), $g(p_g, j)$, respectively. Given an admissible heuristic $h'$, we define a new heuristic $h$ as
  \begin{equation}
 \label{NewHeuristic}
 \begin{split}
   h(i) = \min_{j \in O_{g}} h'(i, j)+g(p_g, j), \ \forall i\in O_{s}\\
   h(j) = \min_{i \in O_s}h'(j, i) + g(p_s, i), \ \forall j\in O_g
 \end{split}
 \end{equation}
On each iteration of the A* while loop, let $i\in O_s$ be the vertex that is to be expanded, let $j\in O_g$ be the vertex that solves \eqref{NewHeuristic} for $i$, and let $p$ denote the pre-computed cost-minimizing feasible motion from $i$ to $j$. We expand vertex $i$ by applying available motions in $E_{R(i)}\cup\{p\}$ where $E_{R(i)}$ is the set of available motions at the relative start of $i$. The addition of $p$ to the available motions improves the performance of the algorithm by allowing quick connections between $T_s, T_g$ where possible. In the same iteration, we then swap $T_s$ and $T_g$ and perform the same steps but with all available motions in reverse. This is illustrated in Figure \ref{UsingPrAC}. Expanding start and goal vertices has proven especially useful during complex maneuvers like backing into a parking space.

\textbf{Off-lattice Start and Goal Vertices:} For a configuration $v\in \X$ and a lattice $L$ given by \eqref{Lattice}, $\text{Round}(v)$ denotes a function that returns the element of $L$ found by rounding each state of $v$ to the closest value of that state in $L$. For lattice $L$ with start set $S$ given by \eqref{start set} and control set $E$ obtained by solving the MILP in \eqref{MILP1}, it is likely that the start and goal configurations $p_s, p_g\in \X$ do not lie in $L$. This is particularly true in problems that necessitate frequent re-planning. While increasing the fidelity of the lattice or computing motions online between lattice vertices and $p_s, p_g$ (e.g., \cite{oliveira2018combining}) can address this issue, both these methods adversely affect the performance of the planner. Instead we propose a method using lattices with graduated fidelity, a concept introduced in \cite{pivtoraiko2009differentially}. We compute a control set $E_{\text{off}}$ of primitives from off-lattice configurations to lattice vertices. These can be traversed in reverse to bring lattice vertices to off-lattice configurations. The technique is summarized in Algorithm \ref{OffLattice} which is executed offline. The algorithm takes as input $L, E$, and a vector of tolerances for each state $Tol=(x_{\text{tol}}, y_{\text{tol}}, \theta_{\text{tol}}, u_{1, \text{tol}},\dots,u_{N,\text{tol}})$. It first computes a set $Q\subset\X$ such that for every configuration in $v\in\X$ there exists a configuration $q\in Q$ that can be translated to $q'\in \X$ where each state of $q'$ is within the accepted tolerance of $v$ (Line 2). For each element of $Q$, we determine the lattice vertex $q'=\text{Round}(q)$, and the set of available actions at the relative start of $q'$, $E_{R(q')}$ . For each primitive in $E_{R(q')}$, we compute a motion from $q$ to a lattice vertex close to the endpoint of $p$ and store it in a set $E_q$ (Lines 7-13). In Lines 9, 10 the vertex $p$ is modified to $p'$ that is no closer to $q$ than $p$. This is to ensure that the motion added to $E_q$ in Line 13 does not posses large loops not present the primitive $p$. These loops can arise if the start and end configurations of a motion are too close together. An illustrative example of this principle is given in Figure \ref{OffLatticeFig}. 

\begin{figure}[h]
\centering
  \includegraphics[width = 0.75\linewidth]{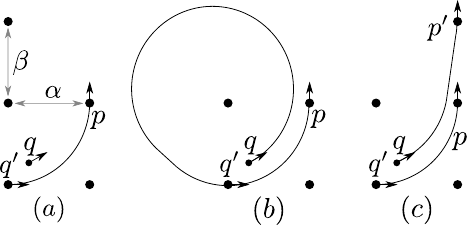}
  \caption{Algorithm \ref{OffLattice} Lines 9-13 Example. (a) Configuration $q\in \X-L$, lattice vertex $q'=\text{Round(q)}$, and primitive $p\in E_{R(q')}$. (b) Motion from $q$ to end of $p$ results in a loop as $q, p$ are too close together. (c) Vertex $p$ replaced with neighboring vertex $p'$ and a motion from $q$ to $p'$ is computed.}
  \label{OffLatticeFig}
\end{figure}

\begin{theorem}[Completeness]
If $\lambda=1$ and $p_s, p_g\in L$, then PrAC returns the cost-minimizing path in $G=(L, \bar{E}_{\text{CF}}, c)$.
\end{theorem}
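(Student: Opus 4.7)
The plan is to reduce PrAC under the stated hypotheses to a standard bidirectional A* search on $G=(L,\bar{E}_{\text{CF}},c)$ and then invoke the classical A* optimality theorem. I would proceed in three steps.

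First, I would show that the weighted-score dynamics collapse to plain A*. Setting $\lambda=1$ yields $a=b=\tfrac12$, so $f'(i)=\tfrac12 f(i)$; multiplying every priority by a positive constant does not alter the order in which vertices leave the priority queue, so the expansion order is identical to that of standard A*. Because $p_s,p_g\in L$, the off-lattice machinery of Algorithm \ref{OffLattice} is never activated: every neighbor produced by expanding $i\in O_s$ using a primitive in $E_{R(i)}$ is a lattice vertex, and the corresponding edge lies in $\bar{E}_{\text{CF}}$ by definition of $\bar{E}$ together with the collision-free restriction. The shortcut motion $p$ added on each iteration is itself the pre-computed cost-minimizing feasible motion between two lattice vertices; if collision-free it either already belongs to $E_{R(i)}$ or duplicates (with equal cost) a concatenation of edges of $\bar{E}_{\text{CF}}$, so it cannot return a path whose cost is smaller than the optimum in $G$.

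Second, I would verify that the bidirectional heuristic $h$ defined in \eqref{NewHeuristic} is admissible. Since $h'$ is admissible by assumption and $g(p_g,j)$ records the true cost in $G$ from $j$ to $p_g$ along the tree $T_g$ traversed in reverse, the quantity $h'(i,j)+g(p_g,j)$ is a lower bound on the cost of any path from $i$ to $p_g$ passing through $j$. Taking the minimum over $j\in O_g$ preserves admissibility provided $O_g$ forms a separator between $p_g$ and the unexplored portion of $G$; I would establish this by maintaining the invariant that any path from $p_g$ out of the closed set must exit $T_g$ through a leaf, i.e.\ through a vertex in $O_g$. The symmetric claim holds for $O_s$, and together they ensure that the $h$-value assigned to any expanded vertex underestimates the true cost-to-go.

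Finally I would invoke the standard A* optimality result: with nonnegative $c$ and admissible $h$, the first time PrAC's termination condition is met---namely when the cost of the best candidate path through the union of $T_s$ and $T_g$ is no greater than the smallest $f'$-value on either open set---the returned path must be cost-minimizing in $G$. The main obstacle I expect is precisely the frontier-separator invariant in step two: because the expansions of $T_s$ and $T_g$ are interleaved and the shortcut $p$ may graft a branch spanning the two trees, proving that $O_g$ (resp.\ $O_s$) remains a genuine cut at every iteration requires a careful induction on the iteration count, together with a case analysis distinguishing standard primitive expansions from shortcut-induced connections.
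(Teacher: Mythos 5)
Your proposal follows essentially the same route as the paper's proof: reduce PrAC under $\lambda=1$ to standard A*, then establish optimality by showing the heuristic $h$ of \eqref{NewHeuristic} is admissible, using exactly the two ingredients the paper uses---that the optimal path from an expanded vertex $i$ to $p_g$ must cross the frontier $O_g$, and that admissibility of $h'$ together with the triangle inequality then bounds $h(i)$ by the true cost-to-go. The only cosmetic difference is that you frame the search as bidirectional A* with a meeting-in-the-middle termination test, whereas the paper treats the expansion of $T_g$ purely as a mechanism for updating the heuristic of a unidirectional A* rooted at $p_s$; the substance of the argument, including the separator invariant you flag as the main obstacle, is the same.
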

\begin{proof}
Given the assumptions of the Theorem, we may view PrAC as standard A* where the expansion of $T_g$ serves only to update the heuristic. That is, PrAC reduces to standard A* with a heuristic given by \eqref{NewHeuristic} for all $i\in O_s$. At each iteration of A*, $h$ is improved by expanding $T_g$. By the completeness of A* for admissible heuristics and finite branching factor (which is the case here for finite control sets), it suffices to show that $h$ is indeed admissible. At each iteration of A*, let $i$ be the vertex in $O_s$ that is to be expanded. Let $\pi$ be the optimal path from $i$ to $p_g$ in $G$. Then $\pi$ must pass through a vertex $r$ in $O_g$. This result holds by the completeness of A* with initial configuration $p_g$ with paths traversed in reverse. Then by the triangle inequality, $c(\pi)\geq c(\pi^{\bar{E}_{\text{CF}}}(i, r))+g(p_g, r)$. Finally, because $h'$ is an admissible heuristic, $c(\pi^{\bar{E}_{\text{CF}}}(i, r))+g(p_g, r)\geq h'(i, r)+g(p_g, r)\geq \min_{j\in O_g}h'(i, j)+g(p_g, j)=h(i)$. Therefore, $h(i)\leq c(\pi)$ which concludes the proof.
\end{proof}

\begin{algorithm}
\caption{Generate Off-Lattice Control Set}
\label{OffLattice}
\small
\begin{algorithmic}[1]
\Procedure{OffLattice}{$L, E, Tol$}
\State $E_{\text{off}}\gets \emptyset$
\small
\blue{
\begin{equation*}
    \begin{split}
         &[U_x^0, U_x^1], [U_y^0, U_y^1], [U_{\theta}^0, U_{\theta}^1] \gets [0, \alpha], [0, \beta], [0, 2\pi - 2\theta_{\text{tol}}]\\
        &N_p\gets \lceil \nicefrac{(U_p^1-U_p^0)}{2u_{p_{\text{tol}}}} \rceil, p=x, y, \theta,1,\dots N\\
        &Q \gets \prod_{p\in\{x, y, \theta, 1,\dots, N\}} \{U_p^0 + \nicefrac{(U_p^1-U_p^0)j}{N_{p}}\}_{j=0}^{N_{p}}
    \end{split}
\end{equation*}
}
\normalsize

\For{$q\in Q$}
\State $(x_q, y_q, \theta_q, u_{1,q},\dots, u_{N,q})\gets q$
\State $E_{q}\gets \emptyset$
\State $q'\gets \text{Round}(q)$
\For{$p\in E_{R(q')}$}
\State $(x_p, y_p, \theta_p, u_{1,p},\dots, u_{N,p})\gets p$
\For{$x\in\{x_p, x_p+\text{sign}(x_p-x_q)\alpha\}$}
\For{$y\in \{y_p, y_p + \text{sign}(y_p-y_q)\beta\}$}
\State $p'\gets (x,y, \theta_p,  u_{1,p},\dots, u_{N,p})$
\State Compute motion from $q$ to \blue{$p'$}
\EndFor
\EndFor
\State Add motion of lowest cost over all $p'$ to $E_{q}$
\EndFor
\State Add $E_{q}$ to $E_{\text{off}}$
\EndFor
\State \textbf{return} $E_{\text{off}}$
\EndProcedure
\end{algorithmic}
\normalsize
\end{algorithm}

\subsection{Motion Smoothing}
\label{sec:Smoothing}
Given the tuple $(\X, L, \s, c, E)$, of configuration space, lattice, cost function, and MTSCS, respectively, let $G=(L, \bar{E}_{\text{CF}}, c)$ be the weighted directed graph with edge set $\bar{E}_{\text{CF}}$ is the collision-free subset of $\bar{E}$ given in \eqref{BarE}.

We now present a smoothing algorithm based on the \emph{shortcut} approach that takes as input a path in $G$, here $\pi^E(p_s, p_g)$, between start and goal configurations $p_s, p_g$. This path is expressed as a sequence of edges in $\bar{E}_{\text{CF}}$. Thus, $\pi^E(p_s, p_g)=\{(i_r, i_{r+1}), r=1, \dots, m-1\}$ for some $m\in\mathbb{N}_{\geq 2}$ where $(i_r, i_{r+1})\in \bar{E}_{\text{CF}}$ for all $r=1,\dots, m-1$ and $i_1=p_s, i_m = p_g$. Let $C_{\pi}$ denote the set of all configurations along motions from $i_r$ to $i_{r+1}$ for all $(i_r, i_{r+1})\in \pi^E(p_s, p_g)$. Algorithm \ref{ALGO2} summarizes the proposed approach.

\begin{algorithm}
\small
\caption{Smoothing Lattice Motion}
\label{ALGO2}
\begin{algorithmic}[1]
\Procedure{DagSmooth}{$\pi^E(p_s, p_g), C_{\pi}, \ChiObs, c, n, \Psi$}
\State$V_1\gets$SampleRandom$(n, C_{\pi})$
\State $V \gets \{i_r\}_{r=1}^m\cup V_1$ in the order that they appear in $C_{\pi}$
\State $\{i_r\}_{r=1}^{m + n}\gets V$
\For{$i\in V$}
\State $\text{dist}(i)=\infty$
\EndFor
\State $\text{dist}(i_1)=0$
\State $\text{Pred}(i_1)=$None
\For{$u$ from 1 to $n + m-1$}
\For{$v$ from $u + 1$ to $m + n$}
\State $p_1 = $ motion from $i_u$ to $i_v$
\State $p_2 = p_1$ motion from $i_v$ to $i_u$
\State $\underline{c}=\min\big(c(p_1), \Psi(c(p_2))\big)$, \State$\underline{p}=\text{arg}\min\big(c(p_1), \Psi(c(p_2))\big)$
\State $\overline{c}=\max\big(c(p_1), \Psi(c(p_2))\big)$, \State$\overline{p}=\text{arg}\max\big(c(p_1), \Psi(c(p_2))\big)$
\If{$\text{dist}(i_u) + \underline{c}\leq \text{dist}(i_v)$}
\If{CollisionFree$(\underline{p}, \ChiObs)$}
\State $\text{dist}(i_v) = \text{dist}(i_u) + \underline{c}$
\State $\text{Pred}(i_v)=i_u$
\Else
\If{$\text{dist}(i_u) + \overline{c}\leq \text{dist}(i_v)$}
\If{CollisionFree$(\overline{p}, \ChiObs)$}
\State $\text{dist}(i_v) = \text{dist}(i_u) + \overline{c}$
\State $\text{Pred}(i_v)=i_u$
\EndIf
\EndIf
\EndIf
\EndIf
\EndFor
\EndFor
\textbf{return} Backwards chain of predecessors from $i_m$
\EndProcedure
\end{algorithmic}
\normalsize
\end{algorithm}

Algorithm \ref{ALGO2} takes as input a collision-free, dynamically feasible path $\pi^E$ between start and goal configurations -- such as that returned by PrAC -- as well as the set of all configurations $C_{\pi}$, obstacles $\ChiObs$, cost function of motions $c$, and a non-negative natural number $n$. It also takes a function $\Psi:\mathbb{R}\rightarrow \mathbb{R}$ which is used to penalize reverse motion. That is, given two configurations $i,j\in \X$ with a motion $p$ from $i$ to $j$, we say that the cost of the motion $p$ is $c(p)$, while the cost of the reverse motion $p'$ from $j$ to $i$ that is identical to $p$ but traversed backwards is $c(p')=\Psi(c(p))$. 

The set $V$ in Line 3 represents sampled configurations along the set of configurations $C_{\pi}$ which includes all endpoints of the edges $(i_r, i_{r+1})\in \pi^E$, as well as $n$ additional random configurations along motions connecting these endpoints. Configurations in $V$ must be in the order in which they appear along $\pi^E$. For each pair ($i_u, i_v$) of configurations with $i_v$ appearing after $i_u$ in $\pi^E$, Algorithm \ref{ALGO2} attempts to connect $i_u$ to $i_v$ with either a motion from $i_u$ to $i_v$ (Line 11) or from $i_v$ to $i_u$ traversed backwards, selecting the cheaper of the two if possible (Lines 17-25).

Were we to form a graph with vertices $V$ and with edges $(i_u, i_v)\in V^2$ where $i_v$ occurs farther along $C_{\pi}$ than $i_u$ and where the optimal feasible motion from $i_u$ to $i_v$ is collision-free, then observe that this graph would be a directed, acyclic graph (DAG). Indeed, were this graph not acyclic, then the path $\pi^E(p_s, p_g)$ would contain a cycle implying that a configuration would appear at least twice in $\pi^E(p_s, p_g)$. This is impossible by construction of the PrAC algorithm which maintains two trees rooted at $p_s, p_g$, respectively. This motivates the following observation and Theorem:
\begin{observation}
\label{obs:DAG}
The nested for loop in lines 9-10 of Algorithm \ref{ALGO2} constructs a DAG and finds the minimum-cost path from $p_s$ to $p_g$ in the graph. 
\end{observation}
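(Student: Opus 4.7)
The plan is to decompose the claim into two parts: first, verify that the edge structure implicitly built by the nested loop is indeed a DAG, and then show that the relaxation scheme is precisely the standard single-source shortest-path algorithm on a DAG whose vertices are processed in topological order.

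For the DAG part, I would define the graph $G' = (V, E')$ where $V = \{i_1, \ldots, i_{m+n}\}$ is the ordered vertex set from Line~3, and $(i_u, i_v) \in E'$ whenever $u < v$ and at least one of the two candidate motions (forward from $i_u$ to $i_v$ with cost $c(p_1)$, or reverse from $i_v$ to $i_u$ with cost $\Psi(c(p_2))$) is collision-free. Each edge is assigned the smaller of the two collision-free options (Lines 14--16 and the fallback in Lines 21--25). Since every edge $(i_u, i_v)$ satisfies $u < v$, any directed cycle would require an index to strictly increase back to itself, which is impossible. Hence $G'$ is acyclic, and the natural ordering $i_1, i_2, \ldots, i_{m+n}$ is a topological order of $G'$. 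This is also where I would invoke the construction explained in the paragraph preceding the observation, namely that $\pi^E(p_s, p_g)$ visits each configuration at most once (guaranteed by PrAC maintaining two trees), so the indexing of $V$ is well-defined.

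For the shortest-path part, I would argue by induction on $v$ that after the iteration of the outer loop with $u = v-1$ completes, $\text{dist}(i_v)$ equals the minimum cost of any path in $G'$ from $i_1 = p_s$ to $i_v$. The base case $v=1$ holds because $\text{dist}(i_1) = 0$ is initialized in Line~7 and never written to (no edge enters $i_1$ since $u < v$ forces $v \geq 2$ for every relaxation targeting $i_1$, which never occurs). For the inductive step, note that any path to $i_v$ ends in an edge $(i_u, i_v)$ with $u < v$, and by the induction hypothesis $\text{dist}(i_u)$ is already optimal when iteration $u$ begins. The inner loop at $u$ considers every successor $i_v$, and the conditional in Lines 17--25 relaxes the edge using the cheapest collision-free weight available. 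Thus the relaxations at each $u < v$ together examine every incoming edge of $i_v$ with the correct predecessor cost, establishing Bellman's optimality equation at $i_v$.

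The two small subtleties I would treat carefully are: (i)~the ``min/max'' logic in Lines 14--25 correctly yields the minimum collision-free weight among the two candidate motions, because if the cheaper one is in collision we fall through to the more expensive one, and otherwise we commit to the cheaper; this matches the weight I assigned to edges of $G'$. (ii)~No edge $(i_u, i_v)$ with $u \geq v$ is ever relaxed, so the algorithm never overwrites $\text{dist}(i_u)$ after $i_u$ is ``processed,'' which is what justifies the single-pass topological-order argument. I expect (i) to be the main obstacle, since it requires checking that the two-tier collision check truly realises the edge weight $\min(c(p_1), \Psi(c(p_2)))$ restricted to collision-free options and never mistakenly introduces an infeasible motion. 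Once these are dispatched, the backward chain of predecessors from $i_m = p_g$ reconstructs a minimum-cost path, completing the proof.
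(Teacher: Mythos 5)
Your proposal is correct and follows essentially the same route as the paper: acyclicity comes from the fact that every edge respects the (well-defined, thanks to PrAC's two-tree construction) ordering of configurations along $\pi^E(p_s,p_g)$, and the nested loop is then recognized as the standard single-source shortest-path relaxation on a DAG processed in topological order. The paper states the shortest-path half as an observation without spelling out the induction; your explicit treatment of the two-tier collision-check logic and the Bellman relaxation simply fills in details the paper leaves implicit.
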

\begin{theorem}
Let $\pi_1^E$ be the input path to Algorithm \ref{ALGO2} between configurations $p_s$, $p_g$ with cost $c(\pi^E_1)$. Let $\pi^E_2$ be the path returned by Algorithm \ref{ALGO2} with cost $c(\pi^E_2)$. Then $c(\pi^E_2)\leq c(\pi^E_1)$. Moreover, if $n=0$, this algorithm runs in time quadratic in the number of vertices along $\pi^E_1$. 
\end{theorem}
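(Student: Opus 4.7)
The plan is to prove the two claims in sequence, exploiting the topological structure imposed by the ordering of $V$.

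For the cost bound $c(\pi^E_2)\leq c(\pi^E_1)$, I would argue by induction on the position $r$ along the input path that, at the moment the outer loop finishes iteration $u=r$, the value $\mathrm{dist}(i_r)$ is at most the cost of the prefix of $\pi^E_1$ up to $i_r$. The base case $r=1$ is immediate from Line~8, where $\mathrm{dist}(i_1)=0$. For the inductive step I would focus on the single loop body where $u=r$ and $v=r+1$. Let $p_1$ be the forward primitive from $i_r$ to $i_{r+1}$ used by $\pi^E_1$; crucially $p_1$ is collision-free because $\pi^E_1$ is collision-free. A short case analysis then carries the bound through: if $\underline{p}$ is collision-free, the update in Lines~18--20 gives $\mathrm{dist}(i_{r+1})\leq \mathrm{dist}(i_r)+\underline{c}\leq \mathrm{dist}(i_r)+c(p_1)$; if $\underline{p}$ is not collision-free, it must be the reverse motion (since forward is collision-free), so $\overline{p}=p_1$ and $\overline{c}=c(p_1)$, and the fallback in Lines~22--26 either fires with the same bound or is skipped precisely because $\mathrm{dist}(i_{r+1})$ is already strictly smaller than $\mathrm{dist}(i_r)+c(p_1)$; finally, if the outer check of Line~17 fails, the current $\mathrm{dist}(i_{r+1})$ is already smaller than $\mathrm{dist}(i_r)+\underline{c}\leq \mathrm{dist}(i_r)+c(p_1)$. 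Combined with the inductive hypothesis on $\mathrm{dist}(i_r)$, the inductive conclusion follows. Specializing to $r=m$ yields $\mathrm{dist}(i_m)\leq c(\pi^E_1)$, and the returned path is the predecessor chain back from $i_m$ which is a valid DAG path of cost exactly $\mathrm{dist}(i_m)$. One can also appeal to Observation~\ref{obs:DAG} as a cleaner wrapper: the edges actually installed by the algorithm along the consecutive pairs of the input form a DAG-path of cost at most $c(\pi^E_1)$, and the algorithm returns the minimum-cost path.

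For the runtime claim with $n=0$, the argument is a direct count: $V_1=\emptyset$ from Line~2, so $|V|=m$, and the nested loops in Lines~10--11 iterate over $\binom{m}{2}$ ordered pairs. Each iteration performs a bounded number of operations --- two primitive lookups (constant time since primitives are precomputed), at most two collision checks, and $O(1)$ relaxation bookkeeping --- giving total runtime $\Theta(m^2)$, i.e.\ quadratic in the number of vertices along $\pi^E_1$.

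The main obstacle I expect is the inductive step in the cost bound, specifically the asymmetric, two-tiered update rule of Lines~17--27: the algorithm only tries $\overline{p}$ when $\underline{p}$ was infeasible \emph{and} $\overline{p}$ would strictly improve $\mathrm{dist}(i_v)$. One must verify that the induction cannot be derailed by the algorithm silently declining to install the consecutive-pair edge $(i_r,i_{r+1})$. The resolution rests on the observation that $\underline{p}$ can only be infeasible when it is the reverse motion --- since the forward primitive is an edge of a collision-free path --- forcing $\overline{p}=p_1$ to be available; and that a skipped update can only happen when $\mathrm{dist}(i_{r+1})$ is already at least as good as the bound being established. Once this is pinned down, both claims follow with little additional work.
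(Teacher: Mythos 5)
Your proof is correct and lands on the same core insight as the paper's --- the input path survives as a candidate in the graph the algorithm searches, so the returned path cannot cost more --- but you reach it by a more elementary and more self-contained route. The paper's proof simply invokes Observation \ref{obs:DAG} (that the nested loop builds a DAG and returns a minimum-cost path in it), notes that every configuration of $\pi^E_1$ lies in $V$, and concludes that $\pi^E_1$ is a feasible solution to that shortest-path problem; the Observation itself is never proved. You instead run an explicit induction on the prefix costs $\text{dist}(i_r)$ through the relaxation steps, and in doing so you verify precisely the point the paper buries inside the Observation: that the asymmetric two-tier update (try $\underline{p}$, fall back to $\overline{p}$ only on collision, and only if it would still improve) cannot silently decline the consecutive edge $(i_r,i_{r+1})$, because the forward primitive of a collision-free input path is always available as $\overline{p}$ whenever $\underline{p}$ is blocked, and any skipped relaxation occurs only when $\text{dist}(i_{r+1})$ already beats the bound being propagated. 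This buys an argument that does not rest on an unproven observation, at the modest price of establishing only $\text{dist}(i_m)\leq c(\pi^E_1)$ rather than global optimality over the DAG (which is all the theorem requires); a clean write-up should also note that $\text{dist}$ values never increase, so the bound persists to termination. One caveat shared by both arguments: for $n>0$ the consecutive elements of $V$ include sampled interior configurations, and the assumption that the motion recomputed in Line 11 between such a pair coincides with the collision-free sub-segment of $\pi^E_1$ is left implicit in the paper as well as in your proposal; for $n=0$ it holds exactly. The runtime argument is identical in both proofs.
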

\begin{proof}
By Observation \ref{obs:DAG}, Algorithm \ref{ALGO2} constructs a DAG containing configurations $p_s, p_g$ as vertices. It also solves the minimum cost path problem on this DAG. Observe further that by construction of the DAG vertex set in Line 3, all configurations along the original path $\pi^E_1$ are in $V$. Thus $\pi^E_1$ is an available solution to the minimum path problem on the DAG. This proves that the minimum cost path can do no worse than $c(\pi^E_1)$. If $n=0$, then $V$ is the set of endpoints of edges in $\pi^E_1$. Therefore, $V\subseteq L$. Because all motions between lattice vertices have been pre-computed in $\mathcal{B}$, Lines 11,12 can be executed in constant time, and the nested for loop in lines 9-10 will thus run in time $O(m)^2$ where $m$ is the number of vertices on the path $\pi^E_1$.
\end{proof}
\begin{figure}[h]
\centering
  \includegraphics[width = 0.8\linewidth]{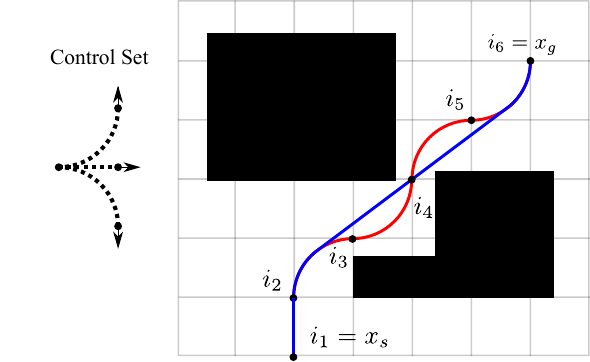}
  \caption{Comparison of smoothing algorithms for same input path (red). Blue: path smoothed using Algorithm \ref{ALGO2}. Red: path smoothed using Algorithm 1 from \cite{oliveira2018combining} (no change from input path).}
  \label{CompareSmooth}
\end{figure}

Algorithm \ref{ALGO2} is similar to Algorithm 1 in \cite{oliveira2018combining} with one critical difference. The latter algorithm adopts a greedy approach, connecting the start vertex $i_1$ in the path $\pi^E$ to the farthest vertex $i_k\in \pi^E$ that can be reached without collisions. This process is then repeated at vertex $i_k$ until the goal vertex is reached. This greedy approach also runs in time quadratic in the number of vertices along the input path but the cost of paths returned will be no lower than the cost of paths returned by Algorithm \ref{ALGO2} proposed here. A simple example is illustrated in Figure \ref{CompareSmooth}. Using the control set in Figure \ref{CompareSmooth}, an initial path is computed from $i_1=p_s$ to $i_6=p_g$ (red). This path remains unchanged when smoothed using the proposed Algorithm in \cite{oliveira2018combining}. However, Algorithm \ref{ALGO2} will return the less costly blue path.

\section{Experimental Results}
\label{ResultsSection}
We verify our proposed technique against two techniques in two common navigational settings: Hybrid A* \cite{dolgov2008practical}, and the hybrid motion primitive, numerical optimization approach (denoted CGPrim) from \cite{zhang2018hybrid}. We consider two variants of Hybrid A*: an unsmoothed version (HA*) and a smoothed version (S-HA*) that uses the smoothing approach outlined in \cite{dolgov2008practical}. The techniques described here were implemented in Python 3.7 (Spyder). Results were obtained using a desktop equipped with an AMD Ryzen 3 2200G processor and 8GB of RAM running Windows 10 OS. Start and goal configurations were not constrained to be lattice vertices. We assume that obstacles are known to the planner ahead of time and that the environment is noiseless. As such, the results that follow can be thought of as a single iteration of a full re-planning process.

\subsection{Memory}
The motions used in this section are $G3$ curves which take the form CCC, or CSC (``C'' denotes a curved segment, and ``S'' a straight segment). Such motions can be easily generated from 9 constants in the case of motion planning without velocity, and 19 constants otherwise \cite{botros2022tunable}. Instead of storing all configurations along a motion, we store only these constants. 

\subsection{Parking Lot Navigation}
We begin by validating the proposed method against HA* and S-HA* in a parking lot scenario. Though HA* is not a new algorithm, more recent state of the art approaches use HA* to plan an initial motion which is then refined (e.g.~\cite{9062306, zhang2018autonomous}). We are therefore motivated to compare the run-time and path quality of the approach proposed in this paper to HA* whose run-time is a lower bound of all state of the art algorithms using it as a sub-routine. In \cite{botros2022tunable}, we present a method of computing motions between start and goal configurations in the configurations space $\X = \mathbb{R}^2\times [0, 2\pi) \times \mathbb{R}^3$. Here, configurations take the form $(x,y,\theta, \kappa, \sigma, \rho)$ where $(x,y, \theta)$ represent the planar coordinates and heading of a vehicle, $\kappa$ the curvature, $\sigma$ the curvature rate (defined as $d\kappa/ds$ for arc-length $s$), and $\rho$ the second derivative of curvature with respect to arc-length. We generate motions with continuously differentiable curvature profiles assuming that $\kappa, \sigma, \rho$ are bounded in magnitude. The motivation for this work comes from the observation that jerk, the derivative of acceleration with respect to time, is a known source of discomfort for the passengers of a car \cite{levinson2011towards}. In particular, minimizing the integral of the squared jerk is often used a cost function in autonomous driving \cite{zhang2018hybrid}. Since this value varies with $\sigma$, keeping $\sigma$ low and bounded is desirable in motion planning for autonomous vehicles. 

Unfortunately, due to the increased complexity of the configuration space over, say, the configuration space used in the development of Dubins' paths, $\X_{\text{Dubins}}=\mathbb{R}^2\times [0, 2\pi)$, solving TPBV problems in $\X$ takes on average two orders of magnitude more time than computing a Dubins' path. Though motions computed using the techniques in \cite{banzhaf2018g}, \cite{botros2022tunable} are more comfortable, and result in lower tracking error than Dubins' paths, they may be computationally impractical to use in motion planners that require solving many TPBV problems online.  However, they prove to be particularly useful in the development of pre-computed motion primitives. 

\subsubsection{Lattice Setup \& Pruning}
The configuration space used here is $\X=\mathbb{R}^2\times [0, 2\pi)\times \mathbb{R}^3$ with configurations $(x,y, \theta, \kappa, \sigma, \rho)$. Motion primitives were generated using the MILP in \eqref{MILP1} for a $15\times 20$ square lattice with 16 headings and 3 curvatures, and a value of $t=1.1$ (10\% error from optimal). To account for the off-lattice start-goal pairs, we used a higher-fidelity lattice with 64 headings and 30 curvatures. Lattice vertex values of $\sigma, \rho$ were set to 0. This results in a start set $\s$ with 12 starts given by \eqref{start set}. The cost $c$ of a motion is defined by the arc-length of that motion. These motions were computed using our work in \cite{botros2022tunable} with bounds on $\kappa, \sigma, \rho$:
\begin{equation}
\label{realbounds}
  \kappa_{\text{max}} = 0.1982 m^{-1}, \ \sigma_{\text{max}} = 0.1868 m^{-2} \ 
  \rho_{\text{max}} = 0.3905 m^{-3},
\end{equation}
which were deemed comfortable for a user \cite{banzhaf2018g}, particularly at low speeds typical of parking lots. The spacing of the lattice $x,y$-values was chosen to be $r_{\text{min}}/4$ for a minimum turning radius $r_{\text{min}}=1/\kappa_{\text{max}}$. Finally, if the cost of the motion from $s\in\s$ to a vertex $j$ was larger than 1.2 times the Euclidean distance from $s$ to $j$ for all $s\in \s$, then $j$ was removed from the lattice. This technique which we dub \emph{lattice pruning} is to keep the lattice relatively small, and to remove vertices for which the optimal motion requires a large loop. The value of 1.2 comes from the observation that the optimal motion from the start vertex $s=(0, 0, 0, \kappa_{\text{max}})$ to $j=(r_{\text{min}}, r_{\text{min}}, \pi/2, \kappa_{\text{max}})$ is a quarter circular arc of radius $r_{\text{min}}$. The ratio of the arc-length of this maneuver to the Euclidean distance from $s$ to $j$ is $\pi/(2\sqrt{2})\approx 1.11$. Thus using a cutoff value of 1.2 admits a sharp left and right quarter turn but is still relatively small.

\subsubsection{Adding Reverse Motion} The motion primitives returned by the MILP in \eqref{MILP1} are motions between a starting vertex $s\in \s$, and a lattice vertex $j\in L - \s$. As such, they are for forward motion only. To add reverse motion primitives to the control set $E_s$ for each $s\in\s$, we applied the forward primitives to $\s$ in reverse. We then rounded the final configurations of these primitives to the closest lattice vertex. For each $(x,y)$-value of the final configurations, we select a single configuration $(x,y,\theta, \kappa)$ that minimizes arc-length. This is to keep the branching factor of an online search low. Finally, to each $s=(0, 0, \theta, \kappa)\in \s$ we add three primitives: $(0, 0, \theta, \pm \kappa_{\text{max}}), (0, 0, \theta, 0)$ with a reverse motion penalty. These primitives reflect the cars ability to stop and instantaneously change its curvature. 

\subsubsection{Scenario Results}We verified our results in five parking lot scenarios (a)-(e). The first four scenarios illustrate our technique in parking lots requiring forward and reverse parking. The results are illustrated in Figure \ref{ParkingLotResults}. Here, we have compared our approach to S-HA* using an identical collision checking algorithm, and using the same heuristic (that proposed in \cite{dolgov2008practical}). Initial paths for our approach were computed using $\lambda =0.2$. These paths were then smoothed via Algorithm \ref{ALGO2} with $n=0$. Though the motions may appear similar, they are actually quite different. 
\begin{figure}[t]
\centering
  \includegraphics[width = 0.9\linewidth]{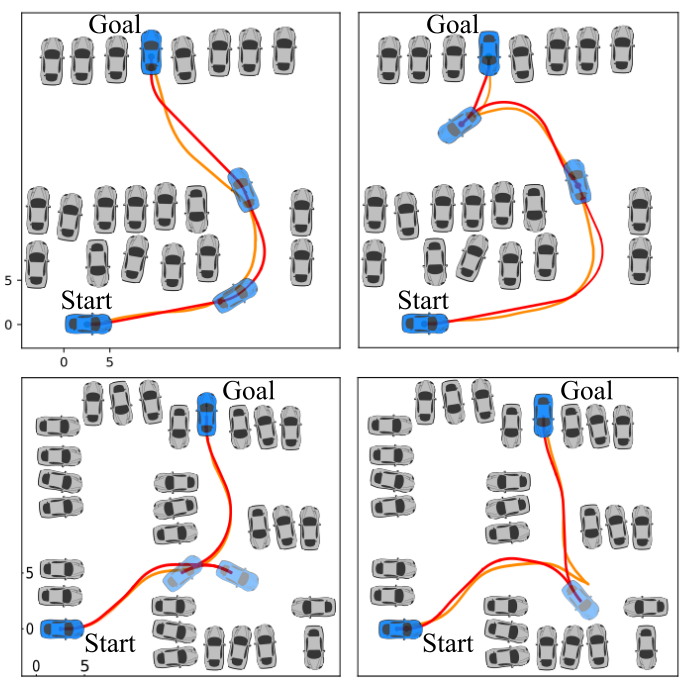}
  \caption{Scenarios (a) - (d) (top left to bottom right). Red paths from proposed method, yellow from S-HA*.}
  \label{ParkingLotResults}
\end{figure}
To evaluate the quality of the motions predicted, we use three metrics: the integral of the square jerk (IS Jerk), final arc-length, and runtime, and the results are summarized in Table \ref{PathSave}. These three metrics are expressed as ratios of the value obtained using S-HA* to those of the proposed. Values for the proposed motions before smoothing (using Algorithm \ref{ALGO2}) are presented in parenthesis. The runtime values for S-HA* for scenarios (a)-(e) were 280ms, 370ms, 275ms, 390ms, and 410ms, respectively. Because these techniques are determinisitc, no standard deviations are presented. The major difference between the two approaches can be seen in the IS Jerk Reduction. Because the motion primitives we employ are each $G3$ curves with curvature rates bounded by what is known to be comfortable the resulting IS Jerk values of proposed paths are up to 16 less than those computed with S-HA*. In fact, using the S-HA* approach may result in motions with infeasibly large curvature rates resulting in larger tracking errors and increased danger to pedestrians. 

Despite the bounds on curvature rate, the final arc-lengths of curves computed using our approach are comparable to those of S-HA*. Further, though Reeds-Shepp paths (which are employed by HA* before smoothing) take on average two orders of magnitude less time to compute than $G3$ curves, the runtime performance of our method often exceeds that of S-HA*. In fact, our proposed method takes, on average 6.9 times less time to return a path, exceeding the average runtime speedup of the method proposed in \cite{zhang2018hybrid}. Moreover, the methods in \cite{zhang2018hybrid} do not account for reverse motion, and assume that a set of way-points between start and goal configurations is known. It should be noted that values for HA* were not included in Table \ref{PathSave} owing to the moments of infinite jerk experienced at transitions between curvatures. However, the proposed approach took on average 4.6 times less time to compute than paths using HA* and featured an average length reduction of 1.4 over HA*.

The only scenario in which S-HA* produces a motion in less time than the proposed method is Scenario (c) in which S-HA* produced a path with no reverse motion (which accounts for the speedup). However, in order to produce this motion the path must feature moments of very large changing curvature resulting in an IS Jerk that is 16.3 times higher than the proposed method. The low run-time of the proposed method may due to the length of primitives we employ. It has been observed that HA* (which is used as a initial path for S-HA*) often takes several iterations to obtain a motion of comparable length to one of our primitives. This results in a much larger open set during each iteration of A*.

The final scenario we investigated is a parallel parking scenario (scenario (e)) illustrated in Figure \ref{ParallelParking}. Though the S-HA* motion appears simpler, it requires an IS Jerk 16.7 times larger than what is considered comfortable. It should be noted that several other parallel parking scenarios in which the clearance between obstacles was decreased were considered. While the proposed method returned a path in each of these scenarios, HA* (and thus S-HA*) failed to produce a path in the allotted time. A further complex parking lot navigation scenario using the proposed approach is shown in Figure \ref{TspanEx}.  

The resolution of the lattice used was $\sim 1.26m$. To verify the efficacy of our approach with higher resolution, we repeated the experiments above with a resolution of $0.5m$ as in \cite{dolgov2008practical} with $t=1.2, \lambda=0.1$. Though the size of the lattice increased by a factor of 6.4, the control set only increased by a factor of 4.1. Runtimes for these experiments was on average 1.8 times faster than S-HA* and 1.2 times faster than HA*. IS Jerk and length reduction increased by less than 1\% after smoothing (via Algorithm \ref{ALGO2}) as compared to the proposed approach with lower resolution.

\begin{figure}[t]
\centering
  \includegraphics[width = 0.77\linewidth]{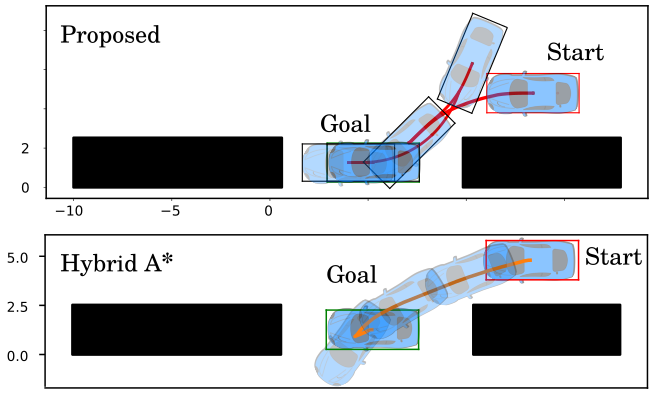}
  \caption{Scenario (e).}
  \label{ParallelParking}
\end{figure}

\begin{table}
\centering
\begin{tabulary}{\textwidth}{C C C C} 
\toprule
 Scenario & IS Jerk Reduction & Length Reduction & Runtime Speedup \\
\midrule
$(a)$ & 7.7 (7.3) & 1.0 (0.8) & 5.3 (5.9)\\
$(b)$ & 9.9 (9.5) & 1.0 (0.8) & 18.4 (18.7)\\
$(c)$ &16.3 (15.6) & 0.8 (0.7) & 0.6 (0.8)\\
$(d)$ & 10.1 (9.9) & 0.9 (0. 8) & 7.3 (7.6)\\
$(e)$ & 16.7 (13.0) & 0.6 (0.2) & 3.0 (3.8)\\
\bottomrule
\end{tabulary}
\caption{Scenario Results}
\label{PathSave}
\end{table}

\subsection{Speed Lattice}
In this experiment, we generate a full trajectory (including both path and speed profile) for use in highway driving using our approach. Here, we use only forward motion as reverse motion on a highway is unlikely. This section compares the proposed method to HA*, S-HA*, and CGPrim. This latter approach was selected as a basis of comparison because it is a state-of-the-art approach that combines the use of motion primitives and numerical optimization.

In addition to developing $G3$ paths, our work in \cite{botros2022tunable} details a method with which a trajectory with configurations $(x, y, \theta, \kappa, \sigma, \rho, v, a, \beta)$ may be computed. Here, $v, a, \beta$ denote velocity and longitudinal acceleration, and longitudinal jerk respectively. The approach is to compute profiles of $\rho$ and $\beta$ that result in a trajectory that minimizes a weighted sum of undesirable trajectory features including the integral of the square (IS) acceleration, IS jerk, IS curvature, and final arc-length. The key feature of this approach is that both path (tuned by $\rho$) and velocity profile (tuned by $\beta$) are optimized simultaneously, keeping path planning in-loop during the optimization. 

Computing trajectories via the methods outlined in \cite{botros2022tunable} requires orders of magnitude more time than simple Reeds-Shepp paths. However, pre-computing a set of motion primitives where each motion is computed using the methods of \cite{botros2022tunable} ensures that every motion used in PrAC is optimal for the user. Moreover, because we include velocity in our configurations (and primitives) we do not need to compute a velocity profile.

\subsubsection{Lattice Setup \& Pruning} Motion primitives were generated \eqref{MILP1} for a $24\times 32$ grid. Dynamic bounds for comfort were kept at \eqref{realbounds}. The $x$ component of the lattice vertices were sampled every $r_{\text{min}}/6$ meters while the $y$ components were sampled every $r_{\text{min}}/12$. Headings were sampled every $\pi/16$ radians (32 samples). We also assumed values of $\kappa=\sigma=\alpha=0$ on lattice vertices. To account for off-lattice start-goal pairs, we use a higher-fidelity lattice with 128 headings, and 10 curvatures between $-\kappa_{\text{max}}, \kappa_{\text{max}}$. Finally, five evenly spaced velocities were sampled between 15 and 20 km/hr. It should be noted that this range could be changed to include 0 if that is desired without altering the methodology proposed here. Further, the performance of the proposed approach is similar for differing ranges if the fineness of the discretization is unchanged. A value of $n=0$ was used for Algorithm \ref{ALGO2}.

\subsubsection{Scenario Results} The highway scenario was chosen to closely resemble the roadway driving scenario in \cite{zhang2018hybrid}. Initial paths for our approach were computed using $\lambda =0.9$ and were smoothed via Algorithm \ref{ALGO2} with $n=0$ (though minimal smoothing was required). Results of this scenario are in Figure \ref{CompleteTraj} while performance analysis is summarized in Table \ref{PathSave2}. Performance is measured with three metrics: arc-length of the proposed motion, smoothness cost of the motion, maximum curvature obtained over the motion, the runtime speedup relative to HA*. The final column of the Table indicates whether a velocity profile was included during the motion computation. In Table \ref{PathSave2}, two values of smoothness cost are given in the form of a tuple $(\text{Smoothness}_1, \text{Smoothness}_2)$ where
\begin{equation*}
  \begin{split}
    \text{Smoothness}_1 &= \sum_{i=1}^{N-1} |\Delta\mathbf{x}_{i + 1} - \Delta\mathbf{x}_{i} |^2,\\
    \text{Smoothness}_2 &= \int_{0}^{s_f}\kappa(s)^2 ds.
  \end{split}
\end{equation*}
The first measure is used in \cite{dolgov2008practical}, where $N$ configurations are sampled along a motion, with $\textbf{x}_i$ the vector of $x,y$-components of the $i^{th}$ configuration, and $\Delta\textbf{x}_i=\textbf{x}_i-\textbf{x}_{i-1}$. The second measure is used in \cite{zhang2018hybrid}. The first three methods appearing in the Table \ref{PathSave2} are computed directly from the motions in Figure \ref{CompleteTraj}, while the last comes from \cite{zhang2018hybrid} for an identical experiment. 

The authors of \cite{zhang2018hybrid} report an average runtime speedup of 4.5 times as compared to HA* for the path planning phase (without speed profile). On average, the proposed computed a full motion, including a speed profile 4.7 times faster than the time required for HA* to compute a path. Further, the proposed approach features significantly reduced the smoothness costs.
\begin{figure}[h]
\centering
  \includegraphics[width = 0.93\linewidth]{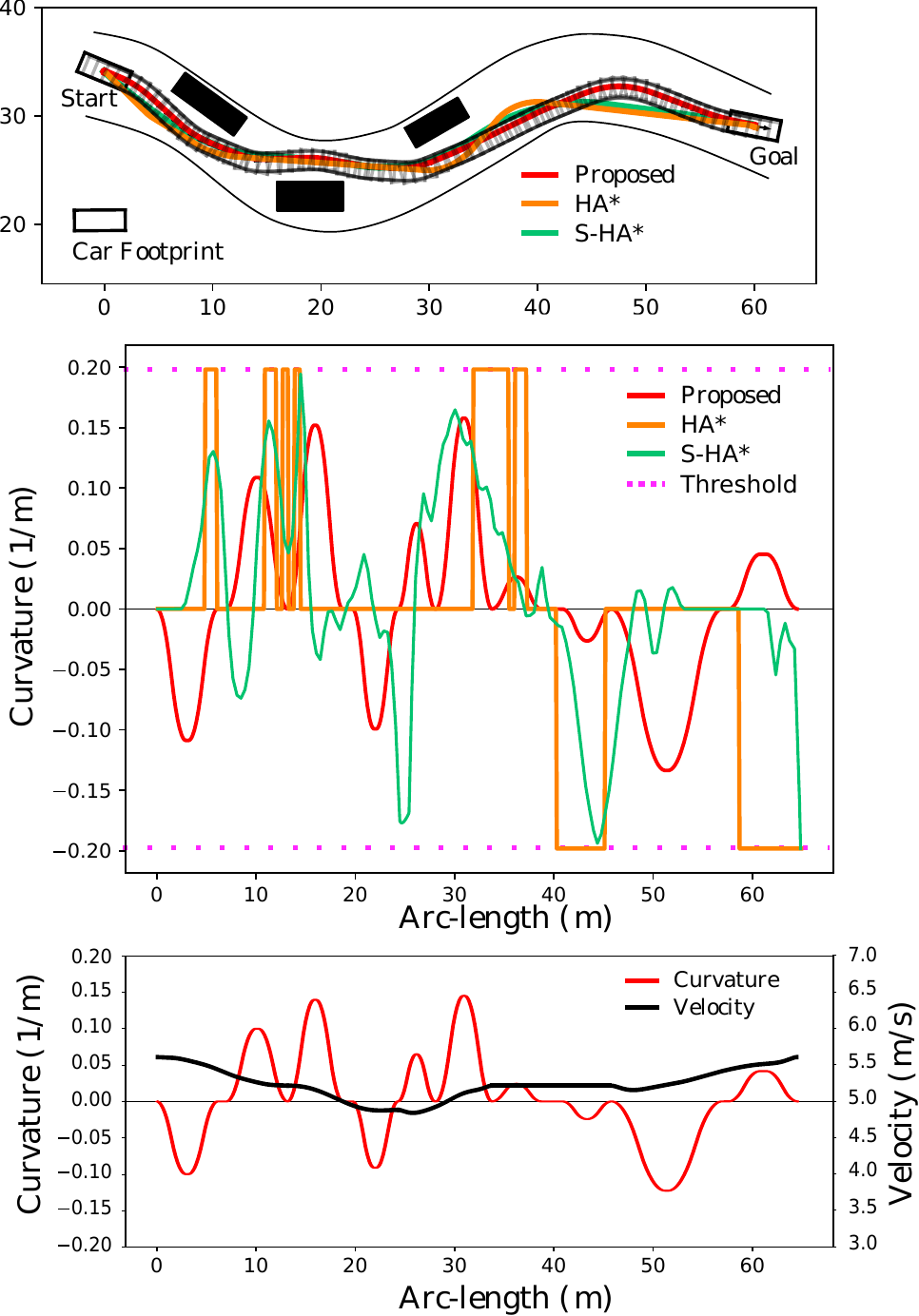}
  \caption{Result of highway maneuver with several obstacles. Top: resulting motions, mid: curvature over motions, bottom: proposed curvature and velocity profile.}
  \label{CompleteTraj}
\end{figure}

\begin{table}[htp]
\centering
\begin{tabulary}{\linewidth}{m{0.09\linewidth} p{0.08\linewidth} p{0.16\linewidth} p{0.17\linewidth} p{0.1\linewidth} p{0.10\linewidth}} 
\toprule
 Method & Length ($m$) & Smoothness Cost & Max Curvature ($m^{-1}$)  & Runtime Speedup & Velocity \\
\midrule
HA* & 65.0 & (1.35, 0.77) & 0.198 & 1.0 & No\\
S-HA* & 64.9 & (0.36, 0.46) & 0.194 & 0.6 & No\\
Proposed &64.6 & (0.17, 0.28) & 0.158 & 4.7 & Yes\\
\bottomrule\\
CGPrim & 65.8 & ( - , 0.44) &0.189 & 4.5 & No\\
\bottomrule
\end{tabulary}
\caption{Road navigation results. CGPrim from \cite{zhang2018hybrid} Table 3 for identical planning problem.}
\label{PathSave2}
\end{table}

\section{Discussion \& Conclusion}
We present a novel technique to compute an optimal set of motion primitives for use in lattice-based motion planning by way of a mixed integer linear program. Further, we propose an A*-based algorithm using these primitives to compute motions between configurations and a post-processing smoothing algorithm to remove excessive oscillations from the motions. 

The results of the previous section illustrate the effectiveness of the proposed technique. Indeed, feasible, smooth motions were computed in both parking lot and highway settings. The proposed approach results in motions with an improved level of comfort in two common metrics: integral squared jerk, and smoothness as compared to state of the art approaches. While these latter techniques may be coupled with additional smoothing techniques, this process will only increase the runtime. While short turns and straight lines as primitives (as employed in \cite{dolgov2008practical}) lend versatility in movement -- a useful feature in parking lot scenarios, they also result in larger graphs which must be traversed by a planner. A better approach resulting in smoother motions with increased runtime performance is to create longer compound actions using these basic motions, study which sequence of motions can be generated using others to within an acceptable tolerance, pre-smooth these motions (e.g. by using $G3$ curves), and store them as an action set. This is precisely the methodology employed by this work.  If the motion primitives already include velocity as a state, then a velocity profile may be easily computed. This paper has not proposed a controller to track the reference paths we compute nor does it propose a framework for re-planning. Further, the resolution for the lattices used in Section \ref{ResultsSection} was chosen based on trial and error given the experiments. Though the choice of resolution is crucial for lattice-based motion planning, this work focused primarily on lattice traversal rather than lattice generation. These are left for future work. 

\bibliographystyle{IEEEtran}
\bibliography{references}
\vspace{-1cm}
\begin{IEEEbiography}[{\includegraphics[width=1in,height=1.25in,clip,keepaspectratio]{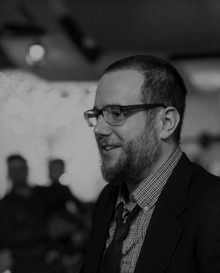}}]{Alexander Botros}
is a postdoctoral fellow in the Autonomous Systems Lab at the University of Waterloo. His research focuses primarily on local planners and trajectory generation for autonomous vehicles. In particular, Alex is researching the problem of computing minimal t-spanning sets of edges for state lattices with the goal of using these sets as motion primitives for autonomous vehicles. Alex Completed his undergraduate and M.Sc. engineering work at Concordia University in Montreal, and his Ph.D. at the University of Waterloo. 
\end{IEEEbiography}
\vspace{-1.3cm}
\begin{IEEEbiography}[{\includegraphics[width=1in,height=1.25in,clip,keepaspectratio]{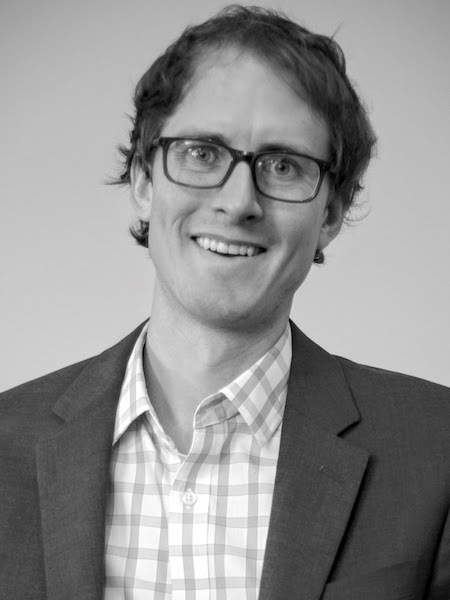}}]{Stephen L. Smith} (S'05--M'09--SM'15) received the B.Sc. degree from Queen’s University, Canada in 2003, the M.A.Sc. degree from the University of Toronto, Canada in 2005, and the Ph.D. degree from UC Santa Barbara, USA in 2009. From 2009 to 2011 he was a Postdoctoral Associate in the Computer Science and Artificial Intelligence Laboratory at MIT, USA.
He is currently a Professor in Electrical and Computer Engineering at the University of Waterloo, Canada and a Canada Research Chair in Autonomous Systems. His main research interests lie in control and optimization for autonomous systems, with an emphasis on robotic motion planning and coordination.
\end{IEEEbiography}
\end{document}